\theoremstyle{plain}
\newtheorem{theorem}{Theorem}[section]
\newtheorem{proposition}[theorem]{Proposition}
\newtheorem{lemma}[theorem]{Lemma}
\theoremstyle{definition}
\theoremstyle{remark}
\newtheorem{remark}[theorem]{Remark}
\newcommand{\h}{\mathbb{H}}
\newcommand{\D}{\mathbb{D}}
\newcommand{\R}{\mathbb{R}}
\newcommand{\LL}{\mathbb{L}}
\newcommand{\eps}{\varepsilon}
\newcommand{\norm}[1]{\left\|#1\right\|}
\icmltitlerunning{The Numerical Stability of Hyperbolic Representation Learning}
\begin{document}

\twocolumn[
\icmltitle{The Numerical Stability of Hyperbolic Representation Learning}



\icmlsetsymbol{equal}{*}

\begin{icmlauthorlist}
\icmlauthor{Gal Mishne}{hdsi}
\icmlauthor{Zhengchao Wan}{hdsi}
\icmlauthor{Yusu Wang}{hdsi}
\icmlauthor{Sheng Yang}{seas}
\end{icmlauthorlist}

\icmlaffiliation{hdsi}{Hal{\i}c{\i}o\u{g}lu Data Science Institute, University of California San Diego, La Jolla, California, USA}
\icmlaffiliation{seas}{Harvard John A. Paulson School of Engineering and Applied Science, Harvard University, Cambridge, Massachusetts, USA}

\icmlcorrespondingauthor{Zhengchao Wan}{z5wan@ucsd.edu}

\icmlkeywords{Machine Learning, ICML}

\vskip 0.3in
]



\printAffiliationsAndNotice{}  

\begin{abstract}
The hyperbolic space is widely used for representing hierarchical datasets due to its ability to embed trees with small distortion. However, this property comes at a price of numerical instability such that training hyperbolic learning models will sometimes lead to catastrophic NaN problems, encountering unrepresentable values in floating point arithmetic. In this work, we analyze the limitations of two popular models for the hyperbolic space, namely, the Poincar\'e ball and the Lorentz model. We find that, under the 64 bit arithmetic system, the Poincar\'e ball has a relatively larger capacity than the Lorentz model for correctly representing points. However, the Lorentz model is superior to the Poincar\'e ball from the perspective of optimization, which we theoretically validate.
To address these limitations, we identify one Euclidean parametrization of the hyperbolic space which can alleviate these issues. We further extend this Euclidean parametrization to hyperbolic hyperplanes and demonstrate its effectiveness in improving the performance of hyperbolic SVM.

\end{abstract}

\section{Introduction}
The $n$-dimensional hyperbolic space $\mathbb{H}^n$ is the unique simply-connected Riemannian manifold with a constant sectional curvature $-1$. A remarkable property of $\mathbb{H}^n$ against the Euclidean space $\mathbb{R}^n$ is that the volume of a ball in $\mathbb{H}^n$ grows exponentially with respect to the radius. One can thus embed finite trees into $\mathbb{H}^n$ with arbitrarily small distortion \citep{sarkar2011low}. This motivates the study of representation learning of hierarchical data into hyperbolic space \citep{nickel2017poincare} and, moreover, the design of deep neural networks in hyperbolic spaces \citep{ganea2018hyperbolic,chen2022fully}, with applications in various domains where hierarchical data is abundant, such as NLP \citep{zhu2020hypertext,lopez2019fine,lopez2020fully}, recommendation systems \citep{chamberlain2019scalable} and neuroscience \citep{gao2020poincare}.

However, the exponential volume growth property leads to numerical instability in training hyperbolic learning models.
As shown by \cite{sala2018representation}, in order to represent points in the popular Poincar\'{e} model \citep{nickel2017poincare,ganea2018hyperbolic}, one requires a large number of bits to avoid undesirable rounding errors when dealing with small numbers. 
The Lorentz model, a popular alternative for representing the hyperbolic space \citep{nickel2018learning,law2019lorentzian}, suffers an opposite numerical issue in dealing with large numbers.
\cite{yu2019numerically} proved that representing Lorentz points using floating number arithmetic could lead to a huge representation error when the points are far from the origin. 
Besides, in terms of optimization and training, the empirical superiority of the Lorentz model over the Poincar\'e model has been often observed in tasks such as word embeddings \citep{law2019lorentzian,nickel2018learning}, however, the reason behind is yet unclear. 

It is thus important to theoretically clarify the practical limitations of current hyperbolic models and to explain why certain models have better empirical performance than others. The goal of this paper is to address these issues and to provide a new model with better numerical performance.


\paragraph{Our Contributions} (1) We are the first to clearly identify the representation capacity of two popular hyperbolic models, the Lorentz and the Poincar\'{e} models, in a geometric lens under the setting of 64-bit system\footnote{Standard CPU/GPU processors have a max precision of 64 bits, and surpassing this limit requires ticklish custom implementations.}.
(2) We provide a theoretical study of optimization on hyperbolic spaces from the perspective of the gradient vanishing issue inherent to hyperbolic models.
We confirm theoretically that the Lorentz model is better than the Poincar\'e model from this aspect, which was previously only an empirical observation.
(3) We study a simple yet effective Euclidean parameterization of the hyperbolic space which turns out to alleviate the numerical issues present in the Lorentz and Poincar\'{e} models. Although this Euclidean parametrization has been utilized in the literature (see for example \cite{mathieu2019continuous}), we provide new insights into this construction by theoretically showing that the Euclidean parameterization is as good as the Lorentz model in terms of optimization; but it has no limitation in representation capacity.
We also empirically validate that the Euclidean parameterization improves the performance in tree embedding tasks compared to both the Lorentz and Poincar\'{e} models.
Moreover, we apply our Euclidean parametrization to hyperbolic SVM and propose the method LSVMPP, which has superior performance in comparison to other methods in our experiments. 


\section{Preliminary}\label{sec:pre}
In this section, we provide the necessary background for the $n$-dimensional hyperbolic space $\mathbb{H}^n$. There are multiple isometric models for describing $\h^n$ (see \citep{peng2021hyperbolic} for a detailed survey). In this paper, we mainly focus on the Poincar\'e ball and the Lorentz model.

\paragraph{Poincar\'{e} Ball} The Poincar\'{e} ball is the unit open ball $\mathbb{D}^n\subseteq\R^n$ with a Riemannian metric conformal to the underlying Euclidean one: at each point $x\in \mathbb{D}^n$, the metric $g_x=\lambda_x^2h_x$, where $h_x$ stands for the Euclidean metric at $x$ and $\lambda_x:=\frac{2}{(1-\|x\|^2)}$. For any $x,y\in \mathbb{D}^n$, the geodesic distance between them is expressed as
\begin{equation}\label{eq:poincare distance}
    d_{\mathbb{D}^n}(x,y)=\mathrm{arccosh}\left(1+ \frac{2\norm{x-y}^2}{(1-\norm{x}^2)(1-\norm{y}^2)}\right)
\end{equation}

\paragraph{Lorentz Model} The Lorentz model interprets $\h^n$ as a submanifold of the so-called Minkowski space $\R^{n,1}$. This is the linear space $\R^{n+1}$ equipped with the Minkowski product $[x,y]:=-x_0y_0+\sum_{i=1}^nx_iy_i$. The Lorentz model (also called the hyperboloid) is then the $n$-dim ``unit sphere'' in the Minkowski space:
\[\mathbb{L}^n:=\{x\in\R^{n,1}:\,x_0>0, \,[x,x]=-1\}.\] 
For any $x,y\in \mathbb{L}^n$, the geodesic distance between them is 
\begin{equation}\label{eq:lorentz distance}
    d_{\mathbb{L}^n}(x,y)=\mathrm{arccosh(-[x,y])}
\end{equation}
\paragraph{Notation for Norms and Gradients}
Let $\norm{\cdot}$ and $\langle \cdot, \cdot \rangle$ denote Euclidean $l_2$ norm and inner product. We use $\norm{\cdot}_{\mathbb{D}^n}$ and $\norm{\cdot}_{\mathbb{L}^n}$ to denote norms of Poincar\'{e} and Lorentz vectors. More precisely, for any $v\in T_x\D^n$ and $w\in T_y\LL^n$,
\[\norm{v}_{\D^n}=\lambda_x\norm{v}\quad\text{and}\quad \norm{w}_{\LL^n}=\sqrt{[w,w]}.\]
Similarly, we use $\nabla$ to denote Euclidean gradient and $\nabla_{\mathbb{D}^n}, \nabla_{\mathbb{L}^n}$ for Poincar\'{e} and Lorentz gradient. 

\paragraph{Transition Between the Two Models}
The isometry between the two models can be written explicitly as
$\varphi:\mathbb{D}^n\rightarrow\mathbb{L}^n$ which sends  $x=(x_1,\ldots,x_n)$ to $y:=\big(\frac{\|x\|^2+1}{1-\|x\|^2},\frac{2x_1}{1-\|x\|^2},\ldots,\frac{2x_n}{1-\|x\|^2}\big).$
Its inverse is $\psi:\mathbb{L}^n\rightarrow\mathbb{D}^n$ which sends $y=(y_0,\ldots,y_n)$ to $x:=(\frac{y_1}{1+y_0},\ldots,\frac{y_n}{1+y_0})$. 
The origin $0$ in $\D^n$ corresponds to the point $\bar{0}=(1,0,\ldots,0)$ in $\LL^n$ under these isometries and we call $\bar{0}$ the origin of the Lorentz model.

\paragraph{Operations on $\h^n$}
In Riemannian geometry, there are several important operations relating tangent vectors and points on manifolds,
such as the exponential map, the logarithm map, and parallel transport, which can be written explicitly when considering either model of $\h^n$. 

For example, in the Poincar\'e ball, the exponential map for any $x\in\D^n$ and any $v\in T_x\D^n$ is given by
\begin{equation}\label{eq:poincare exponential}
    \exp_x(v)=x\oplus \tanh\left(\frac{\lambda_x}{2}\norm{v}\right)\frac{v}{\norm{v}}, 
\end{equation}
where $\oplus$ denotes the so-called Möbius addition \citep{ungar2001hyperbolic}. In the Lorentz model, the exponential map for any $x\in\LL^n$ and any $v\in T_x\LL^n$ is given by
\begin{equation}\label{eq:lorentz exponential}
    \exp_x(v)=\cosh(\norm{v}_{\LL^n}) x + \sinh(\norm{v}_{\LL^n}) \frac{v}{\norm{v}_{\LL^n}}.
\end{equation}

\section{Comparing Lorentz and Poincar\'{e} Models}\label{sec:comparing l and p}

As mentioned in the introduction, the volume of a ball in $\h^n$ grows exponentially w.r.t. the radius. This results in that finite trees can be embedded into $\h^n$ with arbitrarily small distortion \citep{sarkar2011low}. However, in order to reach desired small distortion, one needs to scale the embedded tree by a large factor by pushing embedded points towards infinity in $\h^n$ as much as possible, and this comes at a price of numerical representation.

\paragraph{Poincar\'{e} Ball}
To embed a combinatorial tree $T$ with diameter $l$ and maximal degree $D$ up to distortion $(1+\eps)$, one needs $\Omega(\frac{l\log(D)}{\eps})$ bits to correctly represent embedded points in the Poincar\'{e} ball to avoid these points being rounded off to the boundary \citep{sala2018representation}.

We first interpret the number of bits in representing Poincar\'{e} points through a more geometrical lens.

\begin{proposition}[Poincar\'{e} radius]\label{prop:p radius}
For any point $x\in\D^n$, if $\norm{x}=1-10^{-k}$ for some positive number $k$, then in fact, 
\[d_{\D^n}(0,x)=\ln(10)k + \ln(2) + O(10^{-k}).\]
\end{proposition}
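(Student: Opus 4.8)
The plan is to reduce the Poincaré distance from the origin to a clean closed form in $r := \norm{x}$ and then apply a one-line asymptotic expansion. First I would set $y = 0$ in the distance formula \eqref{eq:poincare distance}; since $\norm{0} = 0$, the argument of $\mathrm{arccosh}$ collapses to $1 + \frac{2r^2}{1-r^2} = \frac{1+r^2}{1-r^2}$, so $d_{\D^n}(0,x) = \mathrm{arccosh}\!\big(\tfrac{1+r^2}{1-r^2}\big)$.

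Next I would invoke the identity $\mathrm{arccosh}(z) = \ln\!\big(z + \sqrt{z^2-1}\big)$, valid for $z \ge 1$. With $z = \frac{1+r^2}{1-r^2}$ one computes $z^2 - 1 = \frac{(1+r^2)^2 - (1-r^2)^2}{(1-r^2)^2} = \frac{4r^2}{(1-r^2)^2}$, hence $\sqrt{z^2-1} = \frac{2r}{1-r^2}$ (using $0 \le r < 1$, so the denominator is positive), and therefore $z + \sqrt{z^2 - 1} = \frac{1 + 2r + r^2}{1 - r^2} = \frac{(1+r)^2}{(1-r)(1+r)} = \frac{1+r}{1-r}$. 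This gives the familiar expression for the Poincaré radius, $d_{\D^n}(0,x) = \ln\frac{1+r}{1-r}$.

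Finally I would substitute $r = 1 - 10^{-k}$, so that $1 - r = 10^{-k}$ and $1 + r = 2 - 10^{-k}$, yielding $d_{\D^n}(0,x) = \ln(2 - 10^{-k}) - \ln(10^{-k}) = k\ln(10) + \ln(2 - 10^{-k})$. Writing $\ln(2 - 10^{-k}) = \ln 2 + \ln\!\big(1 - \tfrac12 10^{-k}\big)$ and using $|\ln(1+t)| \le 2|t|$ for $|t| \le \tfrac12$ (applicable since $k > 0$ forces $\tfrac12 10^{-k} < \tfrac12$), the last term is $O(10^{-k})$ with an absolute constant, which completes the proof.

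I do not foresee a genuine obstacle here; the only points requiring mild care are picking the correct (positive) branch of the square root in the $\mathrm{arccosh}$ simplification and confirming that the hidden constant in the error term does not depend on $n$ or $k$ — both of which are handled by the steps above.
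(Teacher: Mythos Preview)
Your proposal is correct and follows essentially the same route as the paper: both reduce the distance formula at the origin to $\ln\frac{1+r}{1-r}$, substitute $r = 1 - 10^{-k}$, and control the residual $\ln\!\big(1 - \tfrac12 10^{-k}\big)$ as $O(10^{-k})$. The only difference is that you spell out the $\mathrm{arccosh}$ simplification and the error bound explicitly, whereas the paper states those steps tersely.
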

See \Cref{proof:p radius} for proof.
Here $k$ is the number of bits required to avoid $x$ being rounded to the boundary. Under the Float64 arithmetic system, the rounding error for the subtraction $1-10^{-k}$ is $2^{-53}$, i.e., $1-10^{-k}$ will be rounded to $1$ when $10^{-k}\leq 2^{-53}\approx 10^{-16}$. Hence, the maximum $k$ is around $16$. This corresponds to a distance $d_{\D^n}(0,x)\approx 38$. 
Thus, we can only represent points correctly within a ball of radius $r_0\approx 38$ in the Poincar\'{e} ball.

\paragraph{Lorentz Model}
Interestingly, it turns out that the Lorentz model has an even smaller representation capacity: it can only represent points correctly within a ball of radius $r_0/2\approx 19$.
Recall the fundamental equation $[x,x]=-1$ in defining the Lorentz model.
Given a point $x=(x_0,\ldots,x_n)\in\LL^n$, if $x_0=10^8$, then $x_0^2=10^{16}$.
Then, the floating number representation of $x_0^2-1$ is the same as the one for $x_0^2$ (roughly speaking, in the Float64 system, the sum/difference of two numbers differ in orders of magnitude over 16 would be rounded to the larger number itself), causing the computation of $[x,x]$ to return $0$ instead of the desired $-1$.  Moreover, due to the following result, we know that the condition $x_0=10^8$ corresponds to the radius $r_0/2$.

\begin{proposition}[Lorentz radius]\label{prop:l radius}
For any point $x\in\LL^n$, if $x_0=10^{k}$ for some positive number $k$, then, 
\[d_{\LL^n}(\bar{0},x)=\ln(10)k + \ln(2) + O(10^{-2k}).\] 
\end{proposition}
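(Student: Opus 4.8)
The plan is to reduce the statement to a single-variable asymptotic expansion of $\mathrm{arccosh}$, in close analogy with the proof of \Cref{prop:p radius}. First I would evaluate the Minkowski product against the origin: since $\bar 0 = (1,0,\dots,0)$, the definition $[x,y]=-x_0y_0+\sum_{i=1}^n x_iy_i$ gives $[\bar 0, x] = -x_0$, hence $-[\bar 0,x] = x_0 = 10^{k}$. Substituting into \eqref{eq:lorentz distance} yields the exact identity $d_{\LL^n}(\bar 0, x) = \mathrm{arccosh}(10^{k})$. Note that the distance depends only on the $0$-th coordinate, so every point of $\LL^n$ at ``height'' $10^{k}$ lies on a single geodesic sphere about $\bar 0$; this is why the hypothesis constrains nothing but $x_0$.

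Next I would use the closed form $\mathrm{arccosh}(t) = \ln\!\big(t + \sqrt{t^2-1}\big)$, valid for $t\ge 1$, at $t = 10^{k}$. The core estimate is the binomial expansion $\sqrt{10^{2k}-1} = 10^{k}\sqrt{1-10^{-2k}} = 10^{k}\big(1 - \tfrac12 10^{-2k} + O(10^{-4k})\big)$. Adding $10^{k}$ and factoring out $2\cdot 10^{k}$ gives $10^{k} + \sqrt{10^{2k}-1} = 2\cdot 10^{k}\big(1 - \tfrac14 10^{-2k} + O(10^{-4k})\big)$, and then $\ln(1+u) = u + O(u^2)$ with $u = -\tfrac14 10^{-2k}+O(10^{-4k})$ yields
\[
d_{\LL^n}(\bar 0, x) = k\ln(10) + \ln 2 - \tfrac14 10^{-2k} + O(10^{-4k}) = k\ln(10) + \ln 2 + O(10^{-2k}),
\]
which is the claim.

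There is no serious obstacle here; the statement is essentially a one-line computation once the Minkowski product is simplified. The only point deserving care is the bookkeeping of the error term: one should verify that the leading correction is genuinely of order $10^{-2k}$ rather than $10^{-k}$. This is precisely the squaring built into the constraint $[x,x]=-1$ at work — the same squaring that halves the usable radius relative to the Poincaré ball also promotes the error from the $O(10^{-k})$ appearing in \Cref{prop:p radius} to $O(10^{-2k})$ here. I would also remark that the identical expansion applies to $\mathrm{arccosh}(-[x,y])$ for general $x,y\in\LL^n$, which is the form needed in the subsequent analysis of the numerical error of the Lorentz distance.
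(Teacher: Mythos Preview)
Your proposal is correct and follows essentially the same route as the paper: both reduce to $d_{\LL^n}(\bar 0,x)=\mathrm{arccosh}(x_0)=\ln\!\big(x_0+\sqrt{x_0^2-1}\big)$ and then Taylor-expand the square root and the logarithm to extract $\ln(10)k+\ln 2+O(10^{-2k})$. Your version is slightly more explicit in tracking the leading correction $-\tfrac14\,10^{-2k}$, but the argument is the same.
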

See \Cref{proof:l radius} for proof.
This difference in the radii between the two models can be clearly seen from the perspective of the exponential maps. We postpone this interpretation to later \Cref{rmk:2}.

The limitation of the Lorentz model is different from the Poincar\'{e} ball. In the Poincar\'{e} ball, the radius $r_0\approx 38$ is a ``hard'' constraint: any point $r_0$ distant from the origin will collapse to the boundary and there is no way to consider any operations on the hyperbolic space such as computing the distance thereafter. However, in the Lorentz model, the radius $r_0/2\approx 19$ is a ``soft'' constraint: when a point $x=(x_0,x_1,\ldots,x_n)$ is further than $r_0/2$ away from the origin, although there is no viable method to check whether the point is on the Lorentz model, we still have the coordinates to work with: if we let $x_r:=(x_1,\ldots,x_n)$, although $x_0=\sqrt{\norm{x_r}^2+1}$ will be rounded off to $\norm{x_r}$, there is no numerical constraint for the vector $x_r$. In this way, one can still cope with $x_r$ and perform some operations on the Lorentz model. Together with the optimization superiority of the Lorentz model we mention in the next section, this may account for some empirical observations that the Lorentz model is more stable than the Poincar\'{e} ball.

We comment that many works either explicitly or implicitly impose different thresholds in their implementations to restrict all points within a certain radius (\citep{skopek2019mixed},
\citep{nickel2017poincare},
and the popular manifold research toolbox package by \citep{kochurov2020geoopt}, 
with limited discussion on the impact of the choice of these thresholds on the representation capacity and other performance. \textit{The discussion in this section fills the gap and provides a guide for the choice of thresholds.} 

Finally, while the concepts of representation capacity for the Poincaré ball and the Lorentz model initially appear to be distinct, they can be unified through the examination of exponential maps. Although the representation capacity is traditionally defined for points on a manifold within these two models, it can be equivalently specified in the tangent space. This is facilitated by the exponential map $\exp_x:T_x\mathbb{H}^n\to\mathbb{H}^n$, which preserves distance in the radial direction, i.e., $\|v\|=d_{\mathbb{H}^n}(x,\exp_x(v))$ for any $v\in T_x\mathbb{H}^n$.

We then further introduce the concept of \emph{numerical representation capacity} for a hyperbolic model. This is formally defined as the radius of the largest ball, centered at the origin in the tangent space, with the property that all points within this ball can be accurately represented within the hyperbolic model via the exponential map. 
Now, under the exponential map formula \Cref{eq:poincare exponential} for the Poincaré ball, if we take our reference point $x$ to be the origin, then vectors with length larger than 38 will be mapped to the boundary of $\mathbb{D}^n$ which is outside of the manifold. Similarly, in the case of the Lorentz model, under the exponential map formula \Cref{eq:lorentz exponential}, if we choose $x=\bar{0}=(1,0,\ldots,0)$, then vectors with length larger than 19 will be mapped outside of the Lorentz model. Specifically, these vectors will be mapped to the cone $x_0^2=\sum_{i=1}^nx_i^2$. See also later \Cref{rmk:2} for more details. This way of viewing things allows us to see how the concept of numerical representation capacity serves to unite our previous discussions on representation capacity for both the Poincaré and Lorentz models.

\subsection{Optimization}
Although we see that the Poincar\'{e} ball has a larger capacity in representing points than the Lorentz model, we show in this section that this advantage will be wiped out when considering optimization processes on hyperbolic space.

Generally speaking, for any manifold $\mathcal{M}$ and a smooth function $f:\mathcal{M}\rightarrow\R$, in order to solve a Riemannian optimization problem:
\[x^*=\mathrm{argmin}_{x\in\mathcal{M}}f(x),\]
one can apply Riemannian (stochastic) gradient descent \citep{bonnabel2013stochastic} with learning rate $\eta >0$: 
\begin{equation}\label{eq:RSGD}
    x_{t+1}=\exp_{x_t}(-\eta\nabla f(x_t)).
\end{equation}
Theoretically speaking, due to the isometry between $\D^n$ and $\LL^n$, solving optimization problems in $\D^n$ and $\LL^n$ via Riemannian gradient descent should return the ``same'' result. 
To describe this result properly, throughout this section, we will consider a fixed differentiable function $f:\D^n\rightarrow\R$. We consider its corresponding function  $g:\LL^n\rightarrow\R$ on the Lorentz model, i.e., $g=f\circ\psi$ where $\psi:\LL^n\rightarrow\D^n$ is the isometry specified in \Cref{sec:pre}. The functions $f$ and $g$ should be regarded as the same function defined on the hyperbolic space $\h^n$ under two different chart systems. We also illustrate the relationship between $f$ and $g$ through the following commutative diagram.
\[\begin{tikzcd}
  \LL^n \arrow[r,"\psi"] \arrow{dr}[swap]{g}
    & \arrow[l,"\varphi", shift left=0.5ex]\D^n \arrow[d,"f"]\\
&\R \end{tikzcd}\]
In the analysis below, we use $x$ to represent points in $\D^n$ and let $y=\varphi(x)\in\LL^n$ denote its corresponding point in the Lorentz model. Since $\varphi$ and $\psi$ are isometries, the following result holds trivially. 

\begin{proposition}[Gradient descent is the ``same'' for both models]\label{prop:same eta}
For any learning rate $\eta>0$, we have that
\[\psi(\exp_y(-\eta\nabla_{\LL^n} g(y)))=\exp_{x}(-\eta\nabla_{\D^n} f(x)).\]

\end{proposition}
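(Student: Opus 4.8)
The plan is to use nothing beyond the fact that $\varphi\colon\D^n\to\LL^n$ is a Riemannian isometry with inverse $\psi$, together with the naturality of the Riemannian gradient and of the exponential map under isometries. Since by definition $g=f\circ\varphi$, everything reduces to chasing these two naturality statements through the update rule~\eqref{eq:RSGD}.

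First I would fix $x\in\D^n$, set $y=\varphi(x)$, and recall that the differential $d\varphi_x\colon T_x\D^n\to T_y\LL^n$ is a linear isometry of the respective inner products. To transport the gradient, for arbitrary $v\in T_x\D^n$ the chain rule gives $df_x(v)=dg_y(d\varphi_x v)$; expressing both sides through the Riemannian metrics and then using that $d\varphi_x$ preserves them, $\langle\nabla_{\D^n}f(x),v\rangle_{\D^n}=\langle\nabla_{\LL^n}g(y),d\varphi_x v\rangle_{\LL^n}=\langle(d\varphi_x)^{-1}\nabla_{\LL^n}g(y),v\rangle_{\D^n}$. As $v$ is arbitrary this forces $d\varphi_x\big(\nabla_{\D^n}f(x)\big)=\nabla_{\LL^n}g(y)$, and since $d\varphi_x$ is linear it commutes with scaling by $-\eta$.

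Next I would invoke naturality of the exponential map: an isometry sends geodesics to geodesics preserving initial velocity, so $\varphi(\exp_x(w))=\exp_{\varphi(x)}(d\varphi_x w)$ for all $w\in T_x\D^n$. Taking $w=-\eta\nabla_{\D^n}f(x)$ and substituting the gradient identity from the previous step,
\[
\varphi\big(\exp_x(-\eta\nabla_{\D^n}f(x))\big)=\exp_y\big(d\varphi_x(-\eta\nabla_{\D^n}f(x))\big)=\exp_y(-\eta\nabla_{\LL^n}g(y)).
\]
Applying $\psi=\varphi^{-1}$ to both sides yields the claimed identity.

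I do not expect a genuine obstacle, as the proposition is essentially bookkeeping once one grants the two naturality facts; the only place deserving a careful line is the gradient-transport computation, where one must be consistent about which metric ($g_x=\lambda_x^2 h_x$ on $\D^n$ versus the Minkowski form on $\LL^n$) is used on each side. If a fully self-contained proof is preferred over citing naturality, one can verify $\varphi\circ\exp^{\D^n}=\exp^{\LL^n}\circ\,d\varphi$ directly from the explicit formulas for the two exponential maps and the closed form of $\varphi$ in \Cref{sec:pre}; this is longer but entirely elementary.
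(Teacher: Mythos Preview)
Your proof is correct and matches the paper's approach: the paper simply asserts that the proposition ``holds trivially'' because $\varphi$ and $\psi$ are isometries, without further detail, and your argument is precisely the standard unpacking of that claim via naturality of the Riemannian gradient and the exponential map under isometries. One minor point: you repeat the paper's typo ``$g=f\circ\varphi$'' (the commutative diagram shows $f=g\circ\varphi$, equivalently $g=f\circ\psi$), but your chain-rule line $df_x(v)=dg_y(d\varphi_x v)$ is the correct relation, so the argument goes through unchanged.
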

 
Despite this theoretical equivalence, when implementing the Riemannian gradient descent algorithm, the Poincar\'{e} model is more prone to incur the \textit{gradient vanishing problem} than the Lorentz model. 
We first see this through the following computation of the Euclidean norm of Riemannian gradients\footnote{Tangent vectors can be represented using the coordinates provided in either the Poincar\'{e} or the Lorentz model.}. Such norm indicates the magnitude of the coordinates involved in representing the tangent vectors.

\begin{lemma}\label{lm:gradient norm}
For any $x\in \D^n$, assume that $\norm{x}=1-\delta$ for some small positive $\delta$. Then, we have that
\begin{align*}
    \|{\nabla_{\D^n} f(x)}\|&= \Omega\left(\delta^2\norm{\nabla f(x)}\right), \\
    \|{\nabla_{\LL^n} g(y)}\|&= O\left(\|{\nabla f(x)}\|\right).
\end{align*}
\end{lemma}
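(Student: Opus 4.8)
The plan is to handle the two bounds separately; the first is a one-line conformal rescaling, and essentially all the content sits in the second. For $\norm{\nabla_{\D^n}f(x)}$ I would use that a conformal metric $g_x=\lambda_x^2 h_x$ relates Riemannian and Euclidean gradients by $\nabla_{\D^n}f(x)=\lambda_x^{-2}\nabla f(x)$, which follows at once from equating $df_x(v)=\langle\nabla f(x),v\rangle$ with $df_x(v)=\lambda_x^2\langle\nabla_{\D^n}f(x),v\rangle$ for all $v\in T_x\D^n$. Then $\norm{\nabla_{\D^n}f(x)}=\tfrac{(1-\norm{x}^2)^2}{4}\norm{\nabla f(x)}$, and plugging in $\norm{x}=1-\delta$, so that $1-\norm{x}^2=\delta(2-\delta)$, gives $\norm{\nabla_{\D^n}f(x)}=\tfrac{\delta^2(2-\delta)^2}{4}\norm{\nabla f(x)}$, which is $\Theta(\delta^2\norm{\nabla f(x)})$ and in particular $\Omega(\delta^2\norm{\nabla f(x)})$; only $\delta<1$ is used here, not smallness.

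For $\norm{\nabla_{\LL^n}g(y)}$ I would start from the fact that $\varphi$ is a Riemannian isometry with $g\circ\varphi=f$, so Riemannian gradients push forward, $\nabla_{\LL^n}g(y)=d\varphi_x(\nabla_{\D^n}f(x))$, and $d\varphi_x$ preserves Riemannian norms: $\norm{\nabla_{\LL^n}g(y)}_{\LL^n}=\norm{\nabla_{\D^n}f(x)}_{\D^n}=\lambda_x^{-1}\norm{\nabla f(x)}\le\tfrac12\norm{\nabla f(x)}$. The hard part is that this only pins down the \emph{Minkowski} norm of the tangent vector, whereas the lemma asks for the Euclidean $l_2$ norm of its coordinates in $\R^{n+1}$; since the metric of $\LL^n$ is not the Euclidean restriction, these genuinely differ, and a crude conversion between them costs a factor $y_0=\Theta(\delta^{-2})$ that would kill the bound. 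My way around this is the elementary identity $\norm{w}^2=\norm{w}_{\LL^n}^2+2w_0^2$, valid for any $w=(w_0,\dots,w_n)\in\R^{n+1}$ (just expand the two quadratic forms), which reduces everything to controlling the single coordinate $w_0$ of $\nabla_{\LL^n}g(y)$. A short Jacobian computation of $\varphi$ gives that the Euclidean gradient of the scalar map $x\mapsto(\varphi(x))_0=\tfrac{\norm{x}^2+1}{1-\norm{x}^2}$ equals $\tfrac{4x}{(1-\norm{x}^2)^2}$, whence
\[
w_0=\Big\langle\tfrac{4x}{(1-\norm{x}^2)^2},\,\nabla_{\D^n}f(x)\Big\rangle=\langle x,\nabla f(x)\rangle,
\]
using $\nabla_{\D^n}f(x)=\tfrac{(1-\norm{x}^2)^2}{4}\nabla f(x)$ from the first part. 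By Cauchy--Schwarz and $\norm{x}<1$, $w_0^2\le\norm{\nabla f(x)}^2$, so $\norm{\nabla_{\LL^n}g(y)}^2\le\tfrac14\norm{\nabla f(x)}^2+2\norm{\nabla f(x)}^2$, i.e.\ $\norm{\nabla_{\LL^n}g(y)}=O(\norm{\nabla f(x)})$, with an absolute constant and no dependence on $\delta$.

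The single subtle point is the one just flagged: one must not conclude directly from the isometry, since it controls Minkowski norms rather than Euclidean coordinate norms; the whole estimate hinges on the cancellation that makes the potentially huge coordinate $w_0$ equal to the harmless quantity $\langle x,\nabla f(x)\rangle$. A route that never mentions $\varphi$ would instead use the intrinsic formula for the gradient on $\LL^n$ --- the Minkowski-orthogonal projection onto $T_y\LL^n$ of a sign-flipped Euclidean gradient of any smooth extension of $g$ --- and verify $w_0=\langle x,\nabla f(x)\rangle$ and the norm bound from there; this is the same computation reorganized.
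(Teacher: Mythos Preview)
Your proof is correct and the first bound matches the paper's argument exactly. For the second bound you take a genuinely different, and somewhat cleaner, route than the paper. The paper writes down the full differential $D\varphi$ of the isometry $\varphi:\D^n\to\LL^n$ (their Lemma~2, established by checking it inverts the known formula for $D\psi$), applies it to $\nabla_{\D^n}f(x)=\frac{1}{(1+y_0)^2}\nabla f(x)$, obtains all $n+1$ coordinates of $\nabla_{\LL^n}g(y)$ explicitly, and then bounds the resulting Euclidean norm by $\frac{\sqrt{2y_0^2-1}}{1+y_0}\norm{\nabla f(x)}$. You instead use the identity $\norm{w}^2=\norm{w}_{\LL^n}^2+2w_0^2$, which lets you get the Minkowski part for free from the isometry and reduces everything to the single scalar $w_0$; computing only the $0$-th row of the Jacobian then gives the nice cancellation $w_0=\langle x,\nabla f(x)\rangle$. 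Your approach avoids the full Jacobian and an auxiliary lemma, and makes transparent exactly where the potential $\delta^{-2}$ blow-up is tamed; the paper's approach, on the other hand, produces the complete coordinate expression of $\nabla_{\LL^n}g(y)$, which they reuse in the proof of Theorem~1. Both yield the same $O(1)$ constant (your $3/2$ versus their $\sqrt{2y_0^2-1}/(1+y_0)\to\sqrt{2}$), and indeed the $w_0$ you compute coincides with the paper's first coordinate $\frac{\sum y_i\partial_i f(x)}{1+y_0}$ after substituting $y_i=\frac{2x_i}{1-\norm{x}^2}$.
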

See proof from \Cref{proof:gradient norm} and note that in a special case where the vectors $x$ and $\nabla f(x)$ are parallel, the big O for $\|{\nabla_{\LL^n} g(y)}\|$ can be replaced by $\Omega$. Notice that $\delta$ is small when $x$ is close to the boundary of the Poincar\'{e} ball. As a consequence, optimizing through the Poincar\'{e} model results in dealing with numbers smaller in order of magnitude than through the Lorentz model. 
This suggests a more pronounced gradient vanishing problem for the Poincar\'{e} model when points are near the boundary and a potential accumulation of rounding error, e.g., when points are on their way to the boundary to achieve a lower distortion for the task of tree embedding. 

To clearly derive the limitation of the Poincar\'e model in optimization, we consider Taylor expansions of one step of the Riemannian gradient descent (\Cref{eq:RSGD}) for Poincar\'{e} ball and the Lorentz model as follows. 

\begin{theorem}\label{thm:gd p vs l}
Let  $\delta=10^{-k}$ be a small positive number. Without loss of generality, consider the point $x=(1-\delta,0,\ldots,0)\in\mathbb{D}^n$. Assume that $\nabla f(x)=(\partial_1f(x),0,\ldots,0)$\footnote{We only consider this direction as it is the most relevant direction in understanding the phenomenon of pushing points towards the boundary of $\D^n$.} and $\partial_1f(x)<0$. Let $y:=\varphi(x)\in\mathbb{L}^n$. If we let $E:=\|{\nabla f(x)}\|$, then
\begin{align*}
    \exp_x(-\eta\nabla_{\D^n} f(x))&\!=\! (1-10^{-k} +O(\eta E10^{-2k}),0,\ldots,0), \\
    \exp_y(-\eta\nabla_{\LL^n} g(y))&\!=\!\\
     10^k\bigg(1-\frac{1}{2}\cdot 10^{-k}&+\eta E10^{-k}+O(\eta E10^{-2k}), \\
    1-\frac{1}{2}\cdot 10^{-k}&+\eta E10^{-k}+O(\eta E10^{-2k}),0,\ldots,0\bigg).
\end{align*}    
\end{theorem}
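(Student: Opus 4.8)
The plan is to compute the Poincaré update explicitly and then push it through the isometry $\varphi$ to the Lorentz model via \Cref{prop:same eta}. \emph{Poincaré side:} since the metric on $\D^n$ is conformal to the Euclidean one, $\nabla_{\D^n} f(x)=\lambda_x^{-2}\nabla f(x)$; with $\norm{x}=1-\delta$ we have $1-\norm{x}^2=\delta(2-\delta)$, hence $\lambda_x=\tfrac{2}{\delta(2-\delta)}$ and the step vector is $v:=-\eta\nabla_{\D^n} f(x)=\tfrac{\eta E\,\delta^2(2-\delta)^2}{4}\,e_1$, a positive multiple of the first basis vector $e_1$ because $\partial_1 f(x)=-E<0$. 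Since $x$ and $v$ are both multiples of $e_1$, the Möbius addition in $\exp_x(v)=x\oplus\tanh\!\big(\tfrac{\lambda_x}{2}\norm v\big)\tfrac{v}{\norm v}$ collapses to the one–dimensional relativistic addition $a\oplus b=\tfrac{a+b}{1+ab}$ with $a=1-\delta$ and $b=\tanh\!\big(\tfrac{\lambda_x}{2}\norm v\big)=\tanh\!\big(\tfrac{\eta E\,\delta(2-\delta)}{4}\big)$ (equivalently, $\exp_x(v)$ slides $x$ along the diameter through the origin, so its first coordinate is $\tanh\!\big(\mathrm{arctanh}(1-\delta)+\tfrac{\eta E\,\delta(2-\delta)}{4}\big)$). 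Expanding $b=\tfrac{\eta E\delta}{2}-\tfrac{\eta E\delta^2}{4}+O(\delta^3)$ and then $\tfrac{a+b}{1+ab}$ to order $\delta^2$, the crucial point is that the $O(\eta E\delta)$ contributions cancel exactly, so $\tfrac{a+b}{1+ab}=1-\delta+\eta E\delta^2+O(\delta^3)=1-10^{-k}+O(\eta E\,10^{-2k})$, which is the first claimed identity. This exact cancellation is the quantitative face of the gradient–vanishing of \Cref{lm:gradient norm}: one step moves $x$ by only $O(\eta E\delta^2)$.

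\emph{Transport to the Lorentz side:} by \Cref{prop:same eta}, $\exp_y(-\eta\nabla_{\LL^n} g(y))=\varphi\big(\exp_x(-\eta\nabla_{\D^n} f(x))\big)$. Write $\exp_x(-\eta\nabla_{\D^n} f(x))=(1-\delta',0,\ldots,0)$ with $\delta'=\delta-\eta E\delta^2+O(\delta^3)$, so that $\tfrac1{\delta'}=\tfrac1\delta+\eta E+O(\delta)$, and plug $1-\delta'$ into the coordinate formulas for $\varphi$: a short computation gives $\varphi_0=\tfrac{(1-\delta')^2+1}{\delta'(2-\delta')}=\tfrac1{\delta'}-\tfrac12+O(\delta')$ and $\varphi_1=\tfrac{2(1-\delta')}{\delta'(2-\delta')}=\tfrac1{\delta'}-\tfrac12+O(\delta')$, while $\varphi_j=0$ for $j\ge 2$. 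Substituting the expansion of $1/\delta'$ gives $\varphi_0=\varphi_1=10^k-\tfrac12+\eta E+O(\eta E\,10^{-k})$, the second claimed identity. (One may also bypass \Cref{prop:same eta}: since $\varphi$ is an isometry, $\nabla_{\LL^n} g(y)=d\varphi_x(\nabla_{\D^n} f(x))$, where $d\varphi_x(e_1)$ has only its $0$-th and $1$-st coordinates nonzero and $\norm{-\eta\nabla_{\LL^n} g(y)}_{\LL^n}=\tfrac{\eta E}{\lambda_x}=O(\eta E\delta)$; feeding this into the Lorentz exponential map and Taylor–expanding $\cosh,\sinh$ yields the same answer, using $y_0,y_1=\Theta(10^k)$ and that the tangential part equals $(\eta E,\eta E,0,\ldots,0)$ up to $O(\eta E\delta)$ — this is the viewpoint alluded to in \Cref{rmk:2}.)

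\emph{Main obstacle.} The delicate point is precision bookkeeping on the Poincaré side. Because $\varphi$ amplifies the first coordinate by a factor $\sim 1/\delta$, an $O(\delta^2)$ error in $\delta'$ inflates to an $O(\delta)$ error in the Lorentz coordinates, and, more importantly, the exact coefficient of the $\delta^2$ term in $\exp_x(v)_1$ — which comes out to be $\eta E$ — is precisely what produces the visible $+\eta E$ shift in the Lorentz update. Hence a leading–order expansion on the Poincaré side is insufficient: one must carry the $\tanh$ expansion and the $(1+ab)^{-1}$ expansion one order past the stated precision, check that the $O(\eta E\delta)$ terms cancel identically, and verify that the $\eta^2E^2\delta^2$ cross terms cancel so that the $\delta^2$ coefficient is precisely $\eta E$. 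The remaining manipulations of $O(\cdot)$ remainders are routine, with $\eta E$ treated as a fixed constant and $\delta=10^{-k}\to0$.
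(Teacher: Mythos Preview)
Your proposal is correct. On the Poincar\'e side it is essentially the paper's computation, with the cosmetic simplification that you reduce the M\"obius addition to the one--dimensional formula $(a+b)/(1+ab)$ rather than plugging into the general $\oplus$ formula; the resulting expansion is the same.

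On the Lorentz side you take a genuinely different route. The paper computes $\exp_y(-\eta\nabla_{\LL^n}g(y))$ \emph{directly}: it writes out $\nabla_{\LL^n}g(y)$ explicitly (via the differential $D\varphi$ of \Cref{lm: differential}, which for this $x$ gives $\nabla_{\LL^n}g(y)=\big((1-\delta)\partial_1 f,\tfrac{2-2\delta+\delta^2}{2}\partial_1 f,0,\ldots,0\big)$), uses $\norm{\nabla_{\LL^n}g(y)}_{\LL^n}=\norm{\nabla_{\D^n}f(x)}_{\D^n}=\tfrac{(2-\delta)}{2}E\delta$, and then Taylor--expands the Lorentz exponential $\cosh(\cdot)y+\sinh(\cdot)\,v/\norm{v}_{\LL^n}$. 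You instead invoke \Cref{prop:same eta} and push the Poincar\'e update through $\varphi$. Your approach is more economical---no Lorentz gradient and no Lorentz exponential---but, as you correctly flag in your ``main obstacle'' paragraph, it forces you to carry the Poincar\'e expansion one order beyond what the first identity requires: the exact coefficient $\eta E$ of the $\delta^2$ term in $\exp_x(v)_1$ is what survives the $1/\delta$ amplification of $\varphi$ to become the visible $+\eta E$ on the Lorentz side. The paper's direct computation avoids this amplification and only needs each side to its stated precision. Your parenthetical alternative at the end is essentially the paper's method.
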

See \Cref{proof:thm1} for proof.
To interpret the results, assume for simplicity that $\eta=1$ and $E=\norm{\nabla f}=O(1)$ is bounded. 
Then, at each step of Riemannian gradient descent, the Poincar\'e ball needs to compute the sum of two numbers (i.e., $1-10^{-k}$ and $O(\eta E10^{-2k})=O(10^{-2k})$) that differ by $2k$ in orders of magnitude whereas in the Lorentz case only differ by $k$ (i.e., $1-\frac{1}{2}\cdot10^{-k}$ and $\eta E10^{-k}+O(\eta E10^{-2k})=O(10^{-k})$). 
We emphasize that the terms $O(10^{-2k})$ and $O(10^{-k})$ do not represent error terms, but instead indicate the update in one step of Riemannian gradient descent. As a result, the Poincaré model has a smaller update term, which causes it to suffer from a more severe gradient vanishing issue.
In particular, when $k$ is chosen to be $8$, the gradient term $O(10^{-2k})=O(10^{-16})$ which is around the same order as the rounding error. Hence this term will be neglected and the Poincar\'{e} gradient descent will be stuck. However, for the Lorentz model, if one ignores the numerical representation issue, then from the optimization perspective, Lorentz model allows $\delta$ to be as small as $10^{-16}$ without encountering the severe gradient vanishing problem the Poincar\'e ball suffers from.
Of course one could use a large learning rate $\eta=10^8$ so that the Poincar\'e gradient numerical representation is similar to the one for the Lorentz. But such a large learning rate will induce instability in the training process.
Note by \Cref{prop:p radius}, $\delta=10^{-8}$ corresponds to a point $19$ away from the origin. Hence, even though the Poincar\'e ball can represent points up to 38 away from the origin, a large part of the region cannot be utilized in optimization.

\section{Euclidean Parametrization of Hyperbolic Space}\label{sec:euclidean trick}
In the previous section, we see that the Poincar\'e model and the Lorentz model have different pros/cons from two different perspectives: point-representation and optimization. It is thus natural to ask whether there is a way to represent hyperbolic space that could prevail in both perspectives: allowing to represent points within a relatively big region like the Poincar\'{e} ball (i.e., of radius 38) or even larger, while in terms of optimization behaves like the Lorentz model.

In this section, we argue in \Cref{section:feature parametrization} that using a Euclidean parametrization for the hyperbolic space can help address these limitations. 
Such Euclidean parameterization for hyperbolic space has previously been utilized in \citep{mathieu2019continuous} for training hyperbolic features involved in hyperbolic VAE. However, here we present a new perspective that can provide a numerically more robust proxy for optimization on hyperbolic space.
In \Cref{sec:hyperplane}, {we consider a polar coordinate form of this parameterization and extend it to parametrize hyperbolic hyperplanes through a more succinct derivation than the one given in \citep{shimizu2020hyperbolic} and we also correct a mistake in their derivation (see \Cref{rmk:2} and \Cref{rmk:shimizu}).} Finally, we observe that the hyperplane parametrization resolves the nonconvexity condition for the Lorentz hyperplane. In this way, we further apply this parametrization to and show a performance boost of the Lorentz SVM \citep{cho2019large}.

\subsection{Feature Parametrization}
\label{section:feature parametrization}

One property inherent to the negative curvature of the hyperbolic space is that the exponential map at any point is a diffeomorphism (see the Cartan–Hadamard theorem in Riemannian geometry). 
This gives rise to a natural Euclidean parametrization of features in the hyperbolic space via the tangent space and the exponential map: pick a point $p\in \h^n$ and then consider the exponential map $\exp_p:T_p\h^n\rightarrow\h^n$; as $T_p\h^n$ can be identified (see the section below) as $\R^n$, we then have a Euclidean parametrization of $\h^n$.
 
In either the Poincar\'{e} ball or the Lorentz model, there exists a canonical choice of point $p$: the origin $0$ in Poincar\'{e} ball and the origin $\bar{0}:=(1,0,\ldots,0)$ in the Lorentz model. 
Consider the map $\R^n\rightarrow T_0\D^n$ sending $z$ to $\frac{z}{2}$ and the map $\R^n\rightarrow T_{\bar{0}}\LL^n$ sending $z$ to $(0,z)$.
These two maps are both isometries, i.e., $\norm{z}$ agrees with the Riemannian norms of the images of $z$ in $T_0\D^n$ and $T_{\bar{0}}\LL^n$.

Now, we specify how hyperbolic features are recovered through respective exponential maps: let $z \in \mathbb{R}^n$, then $z$ parametrizes Poincar\'{e} features 
\begin{equation}\label{eq:divide 2}
   x= \exp_0\left(\frac{z}{2}\right) = \tanh\left(\frac{\norm{z}}{2}\right) \frac{z}{\norm{z}} 
\end{equation}
and Lorentz features
\begin{equation}\label{eq:lorentz para}
     y = \exp_{\bar{0}}((0, z))= \left(\cosh(\norm{z}), \sinh(\norm{z}) \frac{z}{\norm{z}}\right).
\end{equation}
\looseness=-1We denote these maps by $F_{D}:\R^n\rightarrow\D^n$ and $F_L:\R^n\rightarrow\LL^n$, respectively. They are, of course, sending a point $z\in \R^n$ to the ``same'' point in the two models under the isometry $\varphi:\D^n\rightarrow\LL^n$, i.e., the following diagram commutes.
\[\begin{tikzcd}
  \R^n \arrow[r,"F_D"] \arrow[dr, "F_L"]
    & \D^n \arrow[d,"\varphi"]\\
&\LL^n \end{tikzcd}\]
Furthermore, these two maps preserve distances between a point and the origin, i.e.,
\[d_{\D^n}(0,x)=d_{\LL^n}(\bar{0},y)=\norm{z}.\]
\begin{remark}[The mysterious ``2"]
\label{rmk:2}Note that the division by 2 in the Poincar\'{e} case (see \Cref{eq:divide 2}) is crucial to make the above equation hold. 
The missing of this division by 2 is prevalent in the literature \citep{shimizu2020hyperbolic} and may raise an unfair comparison between Poincar\'{e} and Lorentz models (see more details in later \Cref{rmk:shimizu}). 
We also remark that under the 64 bit arithmetic system, $\tanh(t)$ will be rounded to 1 as long as $t>19$. Hence, one can compute $\exp_0$ correctly up to radius $r_0\approx 2*19$. As for the Lorentz model, since the float representations of $\cosh(t)$ and $\sinh(t)$ will be the same as $t>19$ (note that $\tanh(t)=\frac{\sinh(t)}{\cosh(t)}$), the Lorentz $\exp_{\bar{0}}$ can only represent points correctly within a ball with radius up to $r_0/2\approx 19$.
\end{remark}

This Euclidean parametrization of course allows \textit{representing any point in the hyperbolic space without concern regarding numerical limitations} found in the case of the Poincar\'e ball or the Lorentz model. However, it is important to note that the Euclidean parametrization does not preserve the Riemannian structure of the hyperbolic space. Nonetheless, we consider this parametrization a viable way to train functions defined on hyperbolic spaces. Next we will provide a more precise analysis of the behavior of this parametrization in optimization.

\paragraph{Optimization}
Given this Euclidean parametrization, one can translate hyperbolic optimization problems into a Euclidean one:
Given any differentiable function $f:\D^n\rightarrow \R$, we define $h:\R^n\rightarrow\R$ by letting $h:=f\circ F_D$. Of course, if we consider the function $g:=f\circ\psi:\LL^n\rightarrow\R$ defined on the Lorentz model, the composition $g\circ F_L$ agrees with $h$; see the commutative diagram below.
\[\begin{tikzcd}
	{\R^n} & {\mathbb{D}^n} \\
	& {\mathbb{L}^n} & \R
	\arrow["{F_D}", from=1-1, to=1-2]
	\arrow["{F_L}", from=1-1, to=2-2]
	\arrow["g", from=2-2, to=2-3]
	\arrow["f", from=1-2, to=2-3]
	\arrow["h"', from=1-1, to=2-3, bend right=45pt]
    \arrow["\psi"', from=2-2, to=1-2]
\end{tikzcd}\]
Hence, in the discussion below, for simplicity, we start with a function defined on the Poincar\'{e} ball.

Consider the optimization problem on the hyperbolic space $\D^n$ given by:
\begin{equation}\label{eq:r opt}
    x^*=\mathrm{argmin}_{x\in \D^n}\,{f}(x).
\end{equation}
This problem can be transformed into an optimization problem on the Euclidean space $\R^n$ through the Euclidean parametrization $F_D$, resulting in the following optimization problem:
\begin{equation}\label{eq:alt r opt}
    z^*=\mathrm{argmin}_{z\in \R^n}\,h(z)=\mathrm{argmin}_{z\in \R^n}\,f(F_D(z)).
\end{equation}
Since $F_D$ is a bijective diffeomorphism, any optimizer $z^*$ of \Cref{eq:alt r opt} corresponds to an optimizer $F_D(z^*)$ of \Cref{eq:r opt}, and vice versa.

\textit{The newly obtained optimization problem will suffer less from the gradient vanishing problem than the Poincar\'{e} model and has a similar performance to the Lorentz gradient descent. }
We analyze the one step gradient descent w.r.t. $h$ in a manner similar to \Cref{thm:gd p vs l} below.

\begin{theorem}\label{thm:euclidean}
 
Under the same assumptions and notation as in \Cref{thm:gd p vs l}, if we let $z:= F_D^{-1}(x)=(2\mathrm{arctanh}(1-\delta),0,\ldots,0)\in\mathbb{R}^n$, then $\norm{\nabla h(z)}=\Omega(\delta\norm{\nabla f(x)})$ and
\begin{align*}
    &z-\eta\nabla h(z)\\
    &=\ln(10)k+\ln(2)+({\eta \norm{\nabla f(x)}}-1/2)10^{-k}+O(10^{-2k}).
\end{align*}
\end{theorem}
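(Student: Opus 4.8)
The plan is to unravel $h=f\circ F_D$ by the chain rule, use that on $\R^n$ (with the flat metric) the exponential map is just translation, $\exp_z(v)=z+v$, and then Taylor-expand $\mathrm{arctanh}$ near $1$. The whole computation stays supported on the first coordinate, which is why the right-hand side can be reported as a scalar (the remaining coordinates being $0$). First I would compute $\nabla h(z)$. By the chain rule $\nabla h(z)=(dF_D)_z^{\top}\nabla f(F_D(z))=(dF_D)_z^{\top}\nabla f(x)$, since $F_D(z)=(\tanh(t/2),0,\ldots,0)=(1-\delta,0,\ldots,0)=x$ for $t:=2\,\mathrm{arctanh}(1-\delta)$. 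As $\nabla f(x)$ is supported on the first coordinate, only the first row of $(dF_D)_z$ is relevant. Writing $F_D(z)=\tanh(\norm{z}/2)\,z/\norm{z}$ and differentiating componentwise at the axis point $z=(t,0,\ldots,0)$, the angular terms $\partial_{z_j}(z_1/\norm{z})$ vanish for every $j$ and $\partial_{z_j}\tanh(\norm{z}/2)$ vanishes for $j\neq 1$, so that first row collapses to its $(1,1)$ entry $\tfrac12\big(1-\tanh^2(t/2)\big)=\tfrac12\big(1-(1-\delta)^2\big)=\delta\big(1-\tfrac{\delta}{2}\big)$. Hence $\nabla h(z)=\big(\delta(1-\tfrac{\delta}{2})\,\partial_1 f(x),0,\ldots,0\big)$, and since $E=\norm{\nabla f(x)}=|\partial_1 f(x)|$ we get $\norm{\nabla h(z)}=\delta(1-\tfrac{\delta}{2})\,E=\Theta(\delta\norm{\nabla f(x)})$, in particular $\Omega(\delta\norm{\nabla f(x)})$.

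Next I would write out one gradient step. Because $\exp_z(v)=z+v$ and $\partial_1 f(x)<0$, the first coordinate of $\exp_z(-\eta\nabla h(z))=z-\eta\nabla h(z)$ equals $t+\eta\,\delta(1-\tfrac{\delta}{2})\,E$ and the others are $0$. For the leading term I would use $\mathrm{arctanh}(u)=\tfrac12\ln\frac{1+u}{1-u}$, giving $t=2\,\mathrm{arctanh}(1-\delta)=\ln\frac{2-\delta}{\delta}=-\ln\delta+\ln(2-\delta)=-\ln\delta+\ln 2-\tfrac{\delta}{2}+O(\delta^2)$; substituting $\delta=10^{-k}$ turns $-\ln\delta$ into $\ln(10)k$. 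Combining this with $\eta\,\delta(1-\tfrac{\delta}{2})E=\eta E\,10^{-k}+O(\eta E\,10^{-2k})$ and treating $\eta\norm{\nabla f(x)}$ as $O(1)$ (as in the discussion after \Cref{thm:gd p vs l}) collects the order-$10^{-k}$ terms into $\ln(10)k+\ln(2)+(\eta\norm{\nabla f(x)}-\tfrac12)\,10^{-k}+O(10^{-2k})$, which is the claim. As a consistency check, setting $\eta=0$ recovers $\norm{z}=d_{\D^n}(0,x)=\ln(10)k+\ln(2)+O(10^{-k})$, matching \Cref{prop:p radius}.

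Every piece of this is elementary, so the only real care is bookkeeping. The two spots to watch are: (i) verifying that the first row of $(dF_D)_z$ is purely radial at an axis point --- this is exactly what keeps the single gradient step one-dimensional, so that the stated right-hand side makes sense as a scalar; and (ii) the error control in the $\mathrm{arctanh}$ and $(1-\delta/2)$ expansions, ensuring that everything beyond order $10^{-k}$ is absorbed into $O(10^{-2k})$. The conceptual point, which needs no extra work, is the contrast with \Cref{thm:gd p vs l}: there one step in the Poincar\'e ball adds two numbers differing by $2k$ orders of magnitude, whereas here --- as in the Lorentz model --- the leading term and the gradient correction differ by only about $k$ orders of magnitude, so this Euclidean parametrization inherits the favorable optimization behavior of the Lorentz model while being free of the Poincar\'e point-representation constraint.
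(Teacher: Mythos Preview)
Your proposal is correct and follows essentially the same route as the paper: chain rule $\nabla h(z)=(J_{F_D}(z))^{\top}\nabla f(x)$, reduction to the $(1,1)$ Jacobian entry $\tfrac12(1-\tanh^2(t/2))=\delta(1-\delta/2)$, and then the expansion $2\,\mathrm{arctanh}(1-\delta)=\ln(2/\delta)-\delta/2+O(\delta^2)$ combined with the gradient term. The only cosmetic difference is that the paper writes out the full Jacobian at the axis point before specializing, whereas you argue directly that only the radial entry survives; both yield the same first-coordinate expression and the same asymptotics.
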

See \Cref{proof:thm2} for proof. Similarly, as in the discussion below \Cref{thm:gd p vs l}, we assume $\eta=1$ and $\norm{\nabla f(x)}=O(1)$. In this case, we see the gradient update step involves the sum of two terms whose orders of magnitude differ roughly by $\log_{10}(k)+k$. 
This difference in orders of magnitude sits between the Poincar\'e case and the Lorentz case. 
In particular, when $k$ is smaller than $8$ (recall in \Cref{sec:comparing l and p} the Lorentz model can only correctly represent points when $k\leq 8$), then the difference in orders of magnitude for the Euclidean gradient descent is almost the same as the one for the Lorentz gradient descent. 
Hence, the Euclidean parametrization has similar performance to the Lorentz model in optimization when points are within a reasonable range. We empirically validate this point in tree embeddings experiments in \Cref{sec:epl}.

\begin{remark}
\cite{guo2022clipped} claimed that maps of the form $\R^n\xrightarrow{\exp_0}\D^n\xrightarrow{f}\R$ will incur gradient vanishing problem due to the fact that the Riemannian gradient of $f$ will be small when the point is near the boundary of the Poincar\'e ball (see \Cref{lm:gradient norm}). 
    However, this is only half of the story. In fact, the Jacobian of the map $\exp_0$ will reduce the power of $\delta$ from 2 to 1 and thus help alleviate the gradient vanishing problem (see \Cref{lm:gradient norm}, \Cref{thm:euclidean} and \Cref{proof:thm2} for more details).
\end{remark}
 
\subsection{Hyperplane Parametrization}\label{sec:hyperplane}
The notion of hyperplanes in the hyperbolic space is a natural analogue of the notion of subspaces in the Euclidean spaces. This analogy results in that hyperbolic hyperplanes have been used for hyperbolic multinomial logistic regression (MLR) in \citep{ganea2018hyperbolic}, in designing hyperbolic neural networks in \citep{shimizu2020hyperbolic}, and of course, in designing hyperbolic SVM \citep{cho2019large}.

Consider $p \in \mathbb{L}^n, w \in T_p\mathbb{L}^{n}$, the Lorentz hyperplane passing through $p$ and perpendicular to $w$ is denoted by 
\begin{equation}\label{eq:lorentz hyperplane}
    H_{w, p} = \{x \in \LL^{n} | \; [w, x] = 0 \}.
\end{equation}
Notice that $p$ is implicit in the condition since $w$ is on the tangent space of $p$. In hyperbolic MLR or hyperbolic neural networks, one needs to optimize over the choice of hyperplanes. \Cref{eq:lorentz hyperplane} gives us the opportunity to only use $w$ for optimization. 
However, in order for $w$ to be a feasible tangent vector, we need to impose the nonconvex restriction that $[w, w] > 0$ which is undesirable.

Instead, we derive a Euclidean parametrization of $H_{w,p}$ to get rid of the nonconvex constraint. This parametrization follows closely the Euclidean parametrization of hyperbolic features.
We first parametrize $p$ as in the previous section via $\exp_{\bar{0}}$ but with a flavour of polar coordinates. 
We then parametrize $w$ as the parallel transport of a vector from $T_{\bar{0}}\LL^n$ to $T_p\LL^n$. Now, more precisely, 
let $a \in \mathbb{R}, z \in \mathbb{R}^n$, and set $\bar{z} := (0, z) \in {T}_{\bar{0}}\mathbb{L}^n$. We set 
\begin{equation}\label{eq:e para l hyp}
    p := \exp_{\bar{0}}\left(a \frac{\bar{z}}{\norm{\bar{z}}}\right)=\left(\cosh(a),\sinh(a) \frac{z}{\norm{z}}\right).
\end{equation}
Note that this equation is just the polar coordinate version of \Cref{eq:lorentz para}. Then, we let
\begin{equation}\label{eq:vector parametrization}
    w := \text{PT}_{\bar{0} \mapsto p}(\bar{z})=\left(\text{sinh}(a)\norm{z}, \;  \text{cosh}(a) z\right)
\end{equation}
Here $\text{PT}_{x \mapsto y}$ denotes the parallel transport map along the unique geodesic from $x$ to $y$ and see a succinct derivation of the above formula in \Cref{proof:derivation}.

Hence, the parametrized hyperplane becomes 
$$
\tilde{H}_{z, a} = \{x\in \LL^{n} | \; \text{cosh}(a) \langle z,x_r\rangle = \text{sinh}(a)\norm{z}x_0 \},
$$
where $x_r=(x_1,\ldots,x_n)$ denotes the latter $n$ components of $x=(x_0,x_1,\ldots,x_n)$.
The geometric meaning of this parametrization is as follows: the hyperplane is passing through a point $p$ which is $|a|$ far away from the origin and along the direction $z$, and is perpendicular to a vector $w$ which has length $\norm{z}$.

\begin{remark}\label{rmk:shimizu}
\looseness=-1    \cite{shimizu2020hyperbolic}  parametrized hyperplanes in the Poincar\'{e} ball for the purpose of reducing the number of training parameters from $2n$ to $n + 1$ for hyperplanes in $\D^n$. Our derivation starts from the Lorentz model and is more succinct from the approach via the Poincar\'{e} ball. Eventually, we obtain essentially the formulas up to a small difference: The terms $\cosh(a)$ and $\sinh(a)$ in the formula above are replaced by $\cosh(2a)$ and $\sinh(2a)$ in \citep{shimizu2020hyperbolic}. This difference is due to the mistake that they did not take into account the conformal factor $\lambda_0^2=4$ of $T_0\D^n$. See also \Cref{rmk:2}. 
\end{remark}

\begin{remark}[What if $z=0$?]
    The careful reader may notice that \Cref{eq:e para l hyp} does not work for the case when $z=0$. We note that however, the final formula \Cref{eq:vector parametrization} works for the case when $z=0$. When $z=0$, the tangent vector $w$ becomes the zero vector at $p$ and thus the notion of hyperplane degraded to the whole space.
\end{remark}

Now, we see there is no restriction on $z$ and $a$ and thus we get rid of the nonconvex condition $[w,w]>0$ for parametrizing hyperplanes. This substantially simplifies the optimization process. Next we demonstrate how this parametrization can be used for hyperbolic SVM.

\subsubsection{A New Formulation of Hyperbolic SVM}

Support Vector Machine (SVM) is a classic machine learning model for both classification and regression by training a separating hyperplane. 
\citep{cho2019large} first generalized SVM to datasets in the hyperbolic space for learning a separating hyperbolic hyperplane through the Lorentz model. Consider a training set $\{((x_i), (y_i)) \}_{i=1}^n$ where $x_i \in \mathbb{L}^n, y_i \in \{1, -1\}, \forall i \in \{1, ..., n\}$. Then, the (soft-margin) Lorentz SVM (LSVM) can be formulated as
\begin{align*}
    \underset{w \in \mathbb{R}^{n + 1}, [w,w]>0}{\min} \; &\frac{1}{2} \norm{w}_{\mathbb{L}^n}^2 + C\sum_{i = 1}^n l_{\mathbb{L}^n}(-y_i [w, x_i]) 
\end{align*}
where $l_{\mathbb{L}^n}(z) = \max(0, \text{arcsinh}(1) - \text{arcsinh}(z))$ is a variant of the hinge function that respects the Lorentz distance and $C$ is a scalar controlling the tolerance of misclassification. 
This problem has both nonconvex constraint and nonconvex objective function and the authors propose using a projected gradient descent optimization.

As the vector $w$ in the above formulation serves as the tangent vector determining a Lorentz hyperplane (cf. \Cref{eq:lorentz hyperplane}), we can then parametrize $w$ via \Cref{eq:vector parametrization}. As $w$ is the parallel transport of $\bar{z}$, we have that $\norm{w}_{\LL^n}=\norm{\bar{z}}_{\LL^n}=\norm{z}.$
In this way, we obtain the following minimization problem over parameters $z\in\R^n$ and $a\in\R$ without any nonconvex constraints. 
\begin{align*}
    &\underset{z \in \mathbb{R}^{n}, a \in \mathbb{R}}{\min} \; \frac{1}{2} \norm{z}^2 +
    \\
   &C\sum_{i = 1}^n l_{\mathbb{L}^n}\!\left(y_i (\sinh(a)\norm{z}x_0  - \cosh(a) \langle z,x_r\rangle)\right)
\end{align*}
Empirical evaluation in \Cref{sec:hyperbolic svm experiment} shows that this new SVM framework (LSVMPP) outperforms LSVM, possibly by escaping the local minima.

\section{Experiments}
\label{sec: experiments}

\subsection{Hyperbolic Embeddings}
\label{sec:epl}

This subsection aims to empirically validate our theoretical results (\Cref{thm:gd p vs l} and \Cref{thm:euclidean}) by demonstrating that both the Lorentz and the Euclidean (parametrized) models better leverage the hyperbolic structure than the Poincar\'{e} ball. To do so we embed trees in hyperbolic spaces.
The embedding performance is measured by distortion, max distortion, and diameter. Distortion quantifies the fidelity of embedding w.r.t. the tree structure by simultaneously considering the expansion and contraction of the embedding. In a general setting, let $f: (M_R, d_R) \rightarrow (M_E, d_E)$ be an embedding map between two metric spaces, the distortion of the map is given by $\delta = \delta_{\text{contraction}} \cdot \delta_{\text{expansion}}$ where
\[ \delta_{\text{contraction}} = \underset{x\neq y}{\text{mean}} \frac{d_R(x, y)}{d_E(x, y)} \;; \;
    \delta_{\text{expansion}} = \underset{x\neq y}{\text{mean}} \frac{d_E(x, y)}{d_R(x, y)}    \]
Max distortion $\delta^{\max}$ is defined similarly by replacing the mean with the supremum.

\textbf{Datasets}
We simulated tree datasets $S$ in $\mathbb{R}^2$ with varying degrees of complexity whose distances are equally weighted graph distances. \Cref{fig:sim_tree_vis} lists two example trees with more in \Cref{tab:sim_tree_vis_full}.
A detailed discussion on the impact of performance by its shape can be found in the \Cref{subsec:syn_tree_dgp}.

\begin{figure}[tbp]
    \centering
    \begin{subfigure}
        \centering
        \includegraphics[width=0.23\textwidth]{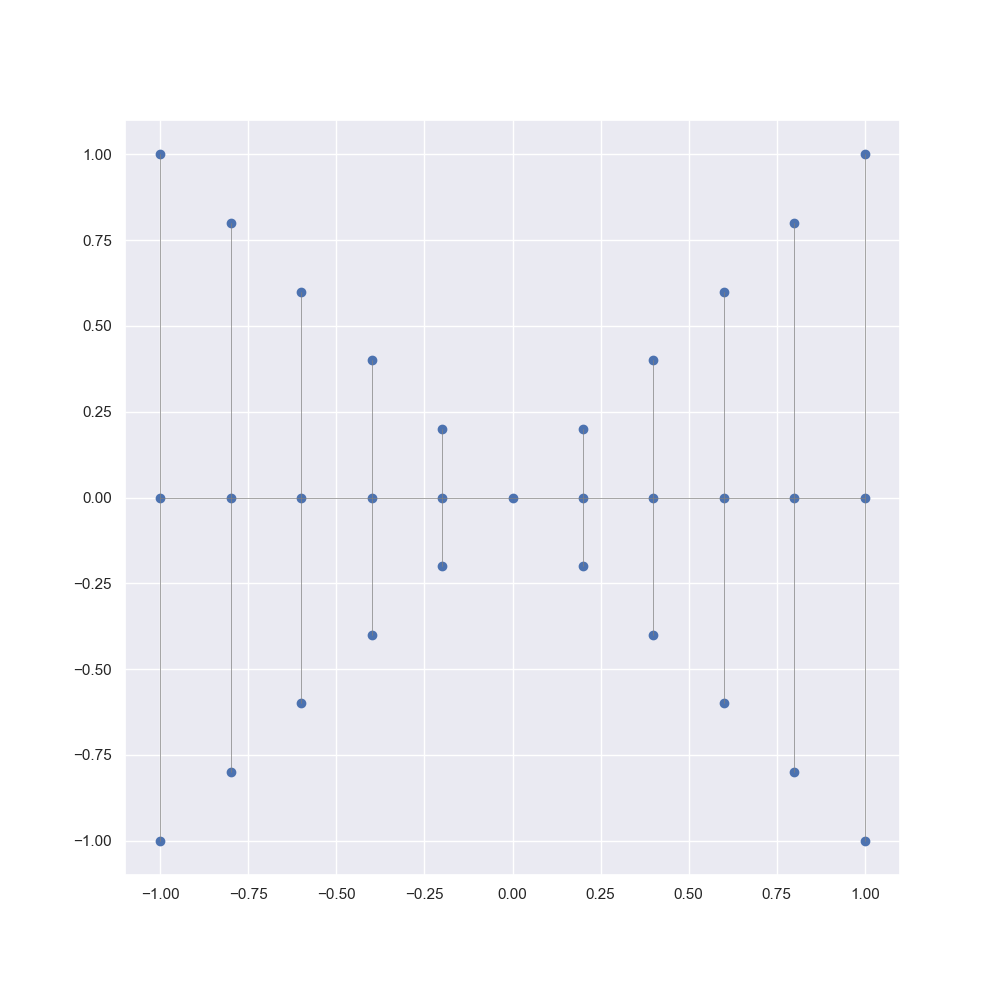}
    \end{subfigure} 
    \hfill
    \begin{subfigure}
        \centering
        \includegraphics[width=0.23\textwidth]{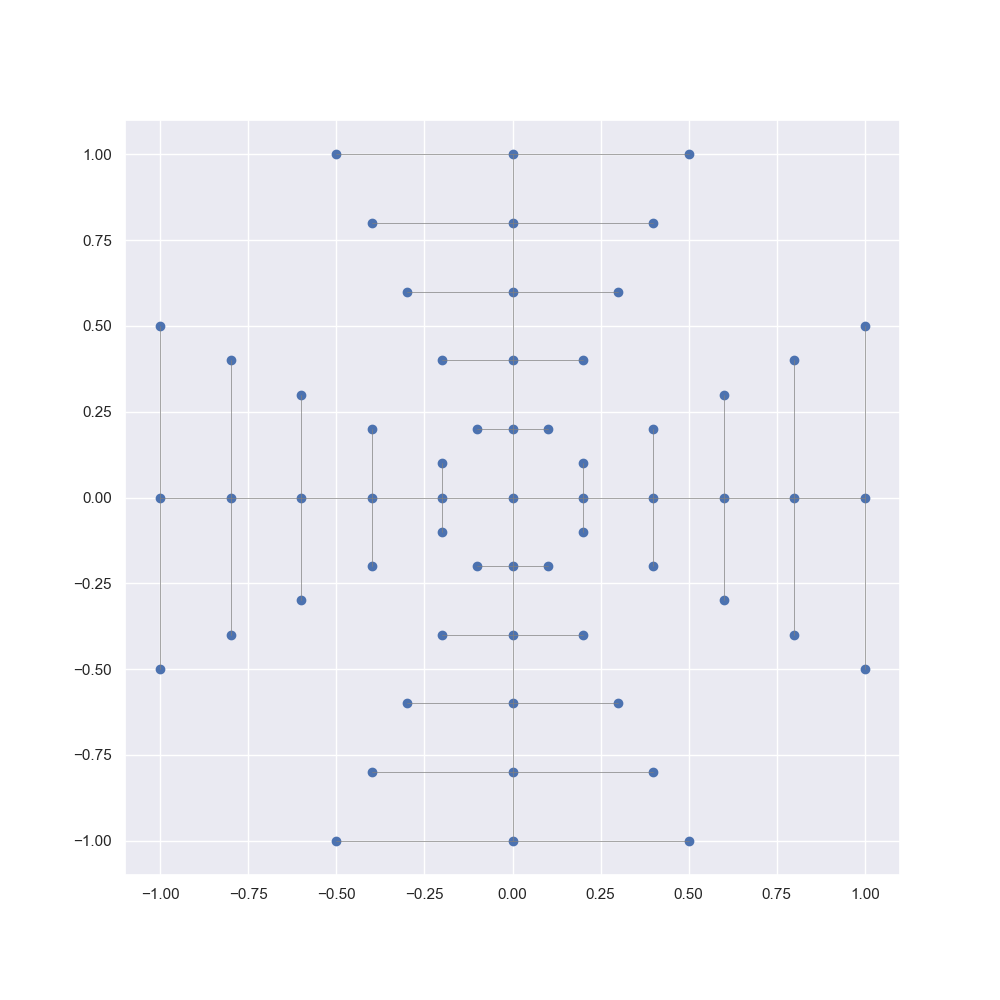}
    \end{subfigure}
    \caption{Simulated Tree 1 and Tree 2 in $\mathbb{R}^2$. The 2D coordinates of each node are features and pairwise distances are computed through shortest path distance on the connected graph.} 
    \label{fig:sim_tree_vis}
\end{figure}

\textbf{Experiments}
 We optimize the following loss function to minimize embedding distortion:
\begin{equation*}
    \mathcal{L}(\theta) = \frac{1}{|S|(|S|-1)}\sum_{x \ne y \in S}\left(\frac{d_E(x, y)}{z_E} - \frac{d_R(x, y)}{z_R}\right) ^ 2 
\end{equation*}
where $\theta$ collects the parameters in the network, $d_E$ denotes the embedding distance and $d_R$ denotes the original tree distance, and $z_E, z_R$ are averages of $d_E$ and $d_R$, respectively.
For $\mathbb{L}^2$ and $\mathbb{D}^2$, the features to update are the results of the respective exponential maps $F_L, F_D$ of each dataset (treating Euclidean data points as in the tangent space of the origin). For the Euclidean model $\mathbb{E}^2$, the features to update are the raw $\mathbb{R}^2$ dataset, but the pairwise distances are computed after $F_L$ or equivalently $F_D$.
We use Riemannian SGD \citep{becigneul2018riemannian} for hyperbolic models and SGD for the Euclidean model, fixing a learning rate of 1 and train for 30000 epochs.

\textbf{Results}
We report the metrics after the final epoch in \Cref{tab:average_distortion_full}. Here raw manifold refers to using Euclidean distances as embedding distances against tree distances to measure distortions (i.e. original distortions). 
In both cases, the benefit of Euclidean parametrization to train hyperbolic models is clear: the Euclidean model achieves the lowest average distortions and max distortion, although the performances of Lorentz and Euclidean are comparable as expected. Furthermore, the diameters of the resulting embeddings for the Lorentz and Euclidean models are much larger than the Poincar\'{e} embeddings. This validates that Poincar\'{e} tends to get stuck using Riemannian optimization methods. 

%
\begin{table}[htbp!]
    \centering
    \caption{Average distortion $\delta$, max distortion $\delta^{\max}$, and diameter $d$ by Dataset created in \Cref{fig:sim_tree_vis} and Manifold}
    \begin{tabular}{cc|ccc}\toprule
    tree & manifold & $\delta$ & $\delta^{\max}$ & $d$ \\ 
    \midrule 
    \multirow{4}{*}{1} & raw            & 1.1954 & 10.6062 & 2.8284 \\
                       & $\mathbb{D}^2$ & 1.0462 & 4.0546 & 4.8740 \\
                       & $\mathbb{L}^2$ & 1.0176 & 2.4511 & 10.1827 \\ 
                       & $\mathbb{E}^2$ & \textbf{1.0157} & \textbf{2.3158} & \textbf{10.9019} \\
    \midrule
    \multirow{4}{*}{2} & raw            & 1.1186 & 14.1421 & 2.2361 \\ 
                       & $\mathbb{D}^2$ & 1.0589 & 7.1435 & 4.9757 \\ 
                       & $\mathbb{L}^2$ & 1.0180 & 3.2803 & 10.6719 \\ 
                       & $\mathbb{E}^2$ & \textbf{1.0134} & \textbf{2.7408} & \textbf{11.4504} \\
    \bottomrule
    \end{tabular}
    \label{tab:average_distortion}
\end{table}
%



\Cref{tab:sim_tree_vis_emb_demo} visualizes the embeddings of Tree 1 and Tree 2 in \Cref{fig:sim_tree_vis} (see also \Cref{tab:sim_tree_vis_emb_part1} and \Cref{tab:sim_tree_vis_emb_part2} in \Cref{sec:tree opt} for visualizations of embeddings for all simulated trees). Lorentz features are stereographically projected to the Poincar\'{e} ball for comparisons. Dimmer points are closer to the edge of the input space while darker points are more at the center. The red point is the root or center of the tree. Poincar\'{e} results are seemingly not as spread out as the other two methods, and resembling the early stage in the optimization process of Euclidean and Lorentz models. 

\begin{figure}[H]
    \centering
    \begin{subfigure}
        \centering
        \includegraphics[width=0.15\textwidth]{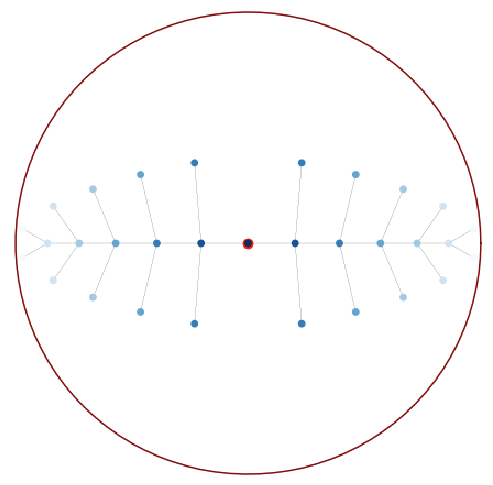}
    \end{subfigure} 
    \hfill 
    \begin{subfigure}
        \centering
        \includegraphics[width=0.15\textwidth]{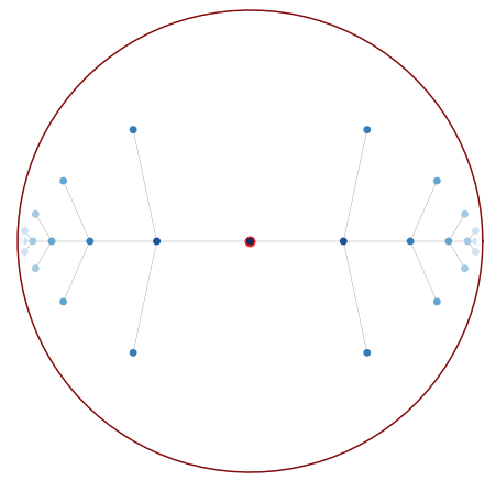}
    \end{subfigure} 
    \hfill 
    \begin{subfigure}
        \centering
        \includegraphics[width=0.15\textwidth]{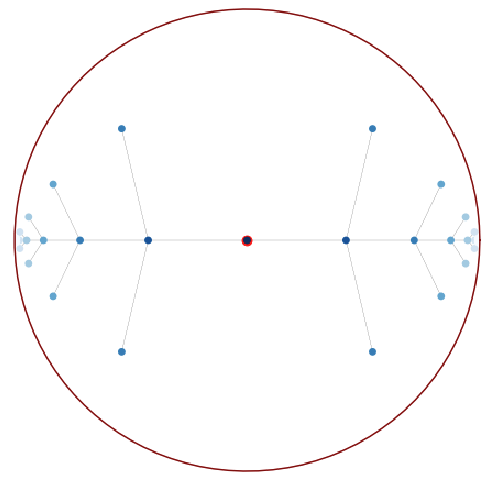}
    \end{subfigure}  \\

    \begin{subfigure}
        \centering
        \includegraphics[width=0.15\textwidth]{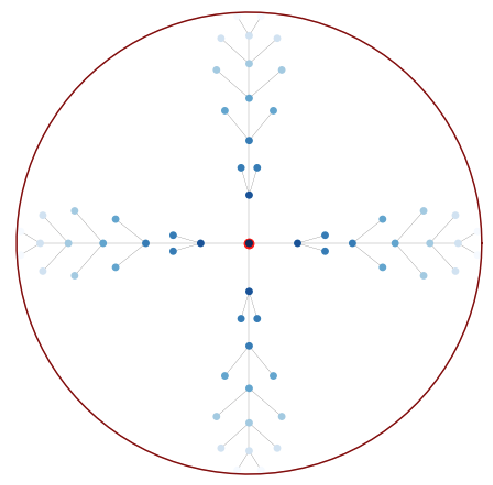}
    \end{subfigure} 
    \hfill 
    \begin{subfigure}
        \centering
        \includegraphics[width=0.15\textwidth]{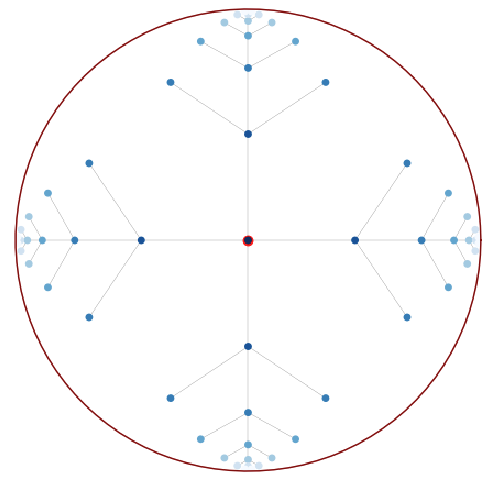}
    \end{subfigure} 
    \hfill
    \begin{subfigure}
        \centering
        \includegraphics[width=0.15\textwidth]{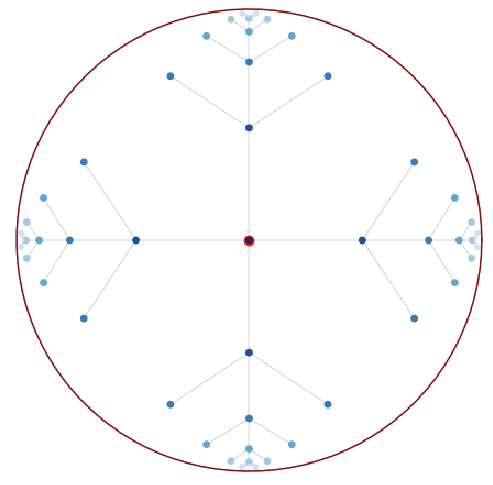}
    \end{subfigure} 
    \caption{Embedding of Tree 1 (top row) and Tree 2 (bottom row) at the final epoch with Poincar\'{e} (left), Lorentz (mid), and Euclidean parametrization (right).}
    \label{tab:sim_tree_vis_emb_demo}
\end{figure}

Finally, in \Cref{fig:rgrad_ratio}, we directly compare the median norms of Riemannian gradient from respective manifolds along different training epochs, whose median is taken across different embedding points. We visualize two ratios, one between gradient norms of $\mathbb{L}^2$ and $\mathbb{E}^2$ (blue) and the other between $\mathbb{D}^2$ and $\mathbb{E}^2$ (red). In both cases, Poincaré graidents have much smaller a norm than its competitors. This further confirms its optimization disadvantage.

\begin{figure}[htbp!]
    \centering
    \begin{subfigure}
        \centering
        \includegraphics[width=0.45\linewidth]{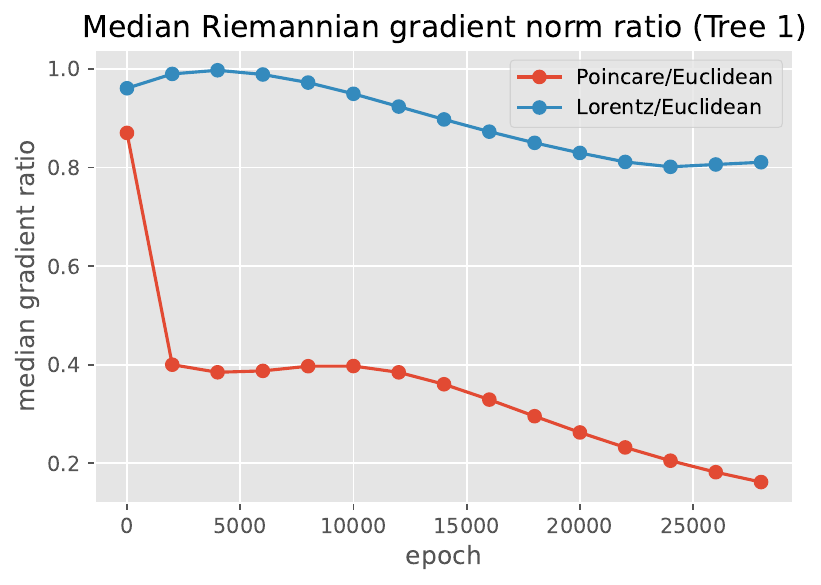}
    \end{subfigure} \hfill
    \begin{subfigure}
        \centering
        \includegraphics[width=0.45\linewidth]{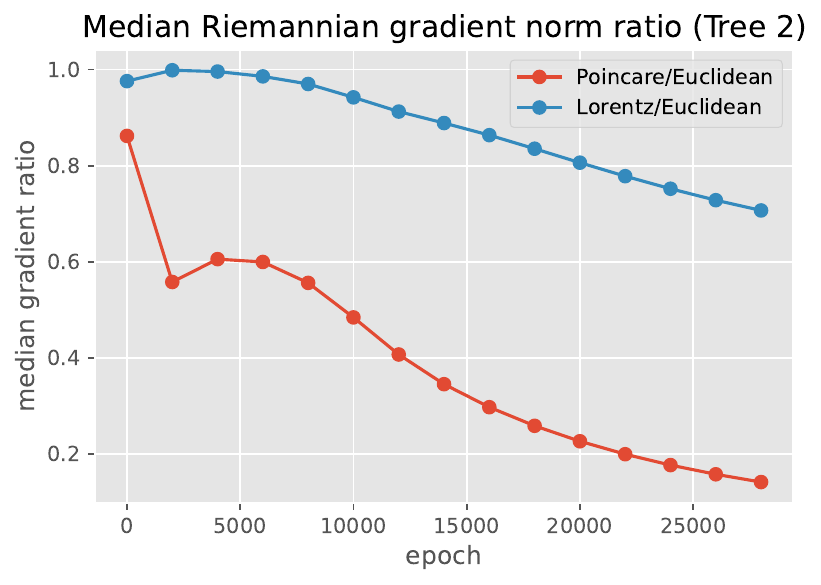}
    \end{subfigure}
    \caption{Median Riemannian Gradient Norm Ratios by epoch for Tree 1 (left) and Tree 2 (right): Poincaré gradients have significantly smaller norms than others}
    \label{fig:rgrad_ratio}
\end{figure}

\subsection{Hyperbolic SVM Models}\label{sec:hyperbolic svm experiment}
We tested the performances of all models in multi-class classification tasks using the following six simulated datasets and six real datasets.

\textbf{Synthetic Datasets}
We simulated two sets of data: \textit{Gaussian mixtures} and \textit{explicit trees}. For Gaussian mixtures, we simulate each dataset using 1200 points on the Poincar\'{e} disk $\mathbb{D}^2$ following the settings in \citep{cho2019large}. Specifically, we first generate a Gaussian mixture on $\mathbb{R}^2$, whose $k$ centroids follow $N(\mathbf{0}, 1.5\mathbb{I}_2)$ equipped with $N(\mathbf{0}, \mathbb{I}_2)$ noise. Each of the $k$ clusters is given a unique label and has the same number of points. We then use the map $F_D$ to map Euclidean features to Poincare features.
We tested performances on $k \in \{3, 5, 10\}$. {For each data set of explicit trees, we initialize a tree in $\mathbb{R}^2$ and then embed the tree into the hyperbolic space using Euclidean parametrization as described in \Cref{sec:epl}. Then we randomly select a node and treat all of its descendants as positive class and negative class otherwise (i.e. $k=2$). This subtree-classification also resembles the WordNet subtree classification presented in \citep{cho2019large}. See \Cref{app:svm_syn_data_gen} for more details.}

\textbf{Real Datasets}
We tested the performances on three datasets: CIFAR-10 \citep{krizhevsky2009learning}, fashion-MNIST \citep{xiao2017fashion}, Paul Myeloid Progenitors developmental dataset \citep{paul2015transcriptional}, Olsson Single-Cell RNA sequencing dataset \citep{olsson2016single}, Krumsiek Simulated Myeloid Progenitors \citep{krumsiek2011hierarchical}, and Moignard blood cell developmental trace from single-cell gene expression \citep{moignard2015decoding}. CIFAR-10 and Fashion-MNIST each contain images of 10 different types of objects. Other datasets are rooted in biological applications: Paul has 19 cell types developed from the myeloid progenitors; Krumsiek likewise but with a different latent structure and 11 classes; Olsson is a smaller dataset with RNA-seq data of 8 types; Moignard has 7 classes for different blood cell developmental stages. Each dataset is embedded into $\mathbb{D}^2$ with the default curvature 1 using the technique introduced in \citep{khrulkov2020hyperbolic,nickel2017poincare}. We visualize three of the tested datasets in \Cref{fig:svm_data}.

\begin{figure}[htbp]
    \centering
    \begin{subfigure}
        \centering
        \includegraphics[width=0.15\textwidth,trim={1.5cm 1cm 0 1cm},clip]{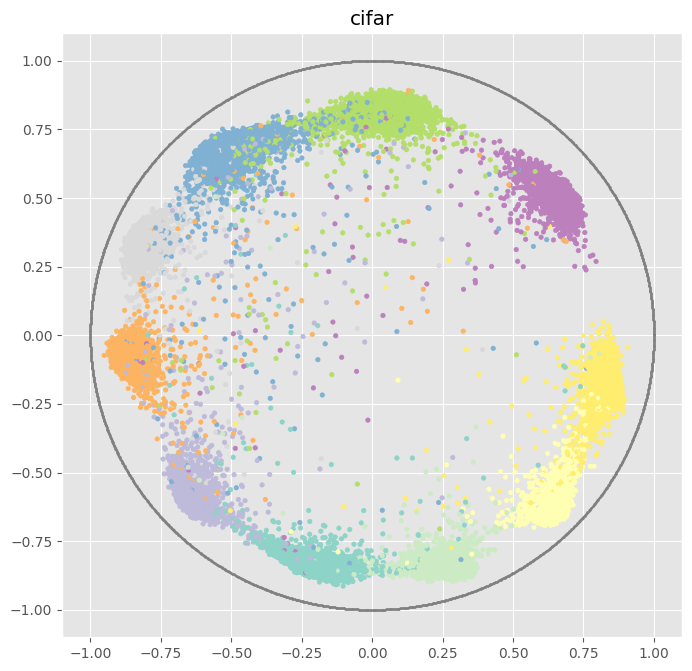}
    \end{subfigure}
    \hfill
    \begin{subfigure}
        \centering
        \includegraphics[width=0.15\textwidth,trim={1.5cm 1cm 0 1cm},clip]{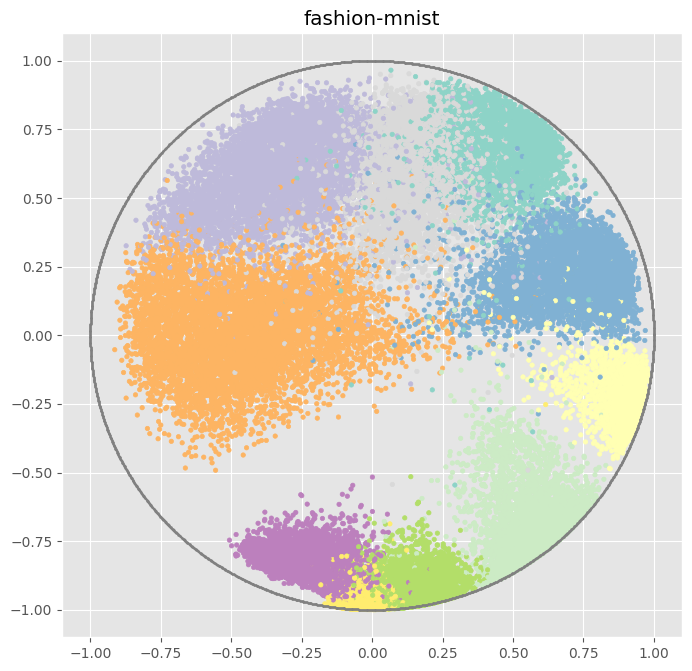}
    \end{subfigure}
    \hfill
    \begin{subfigure}
        \centering
        \includegraphics[width=0.15\textwidth,trim={1.5cm 1cm 0 1cm},clip]{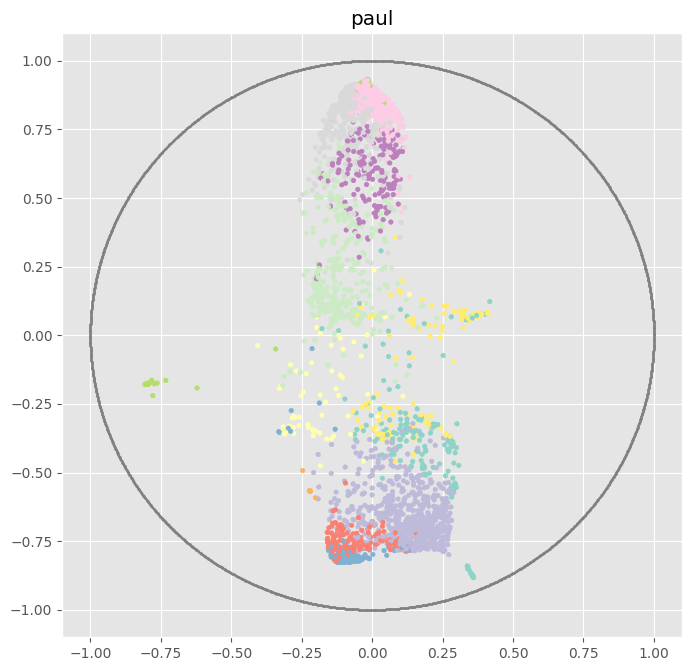}
    \end{subfigure} \\
    \caption{CIFAR, Fashion-MNIST, and Paul datasets Poincar\'{e} visualization. Each color correspond to one label class.}
    \label{fig:svm_data}
\end{figure}

\textbf{Results}
For each dataset, we fix a train-test split and run 5 times. In each run, we use a one-vs-all approach to train and use Platt scaling \citep{platt1999probabilistic} to give final predictions based on output probabilities. The average multiclass classification accuracy and macro F1 scores are reported. We compare our model (LSVMPP) against the Lorentz SVM (LSVM), the Euclidean SVM (ESVM) and a SVM with precomputed reference points based on the Poincar\'{e} ball (PSVM) \citep{chien2021highly}. See \Cref{implementation and hyperparams} for and a detailed discussion on implementation and respective optimal hyperparameters, \Cref{tab:real_dataset_result_partial} for average accuracy and F1 for selected datasets in \Cref{fig:svm_data}, and \Cref{app:svm_full_results} for experiment results on more datasets.

Although small fluctuations exist, our model outperforms other available methods in most of the simulated and real datasets. Note that in all cases, Euclidean SVM already has decent performances, unlike the unsatisfactory performances reported in \citep{chien2021highly} where ESVM models are mistakenly set to not have intercepts. The good performance by ESVM may be attributed to the powerful scaling method that universally applies to datasets well characterized by the log odds of positive samples, and this assumption may still hold on datasets with a latent hierarchical structure. It is also worth pointing out that the previous formulation of Lorentz SVM \citep{cho2019large} also uses Platt scaling on Lorentz SVM output but did not scale it using $\text{arcsinh}$ to morph it into a proxy of hyperbolic distance, which is more aligned with the Euclidean SVM output. Empirically we witness a marginal increase in average accuracy using Platt scaling with the output of $\text{arcsinh}$ instead of the raw alternatives (see \Cref{adaptation} for more details).

Lastly, notice that the previously leading model, PSVM, does not outperform ESVM in simple simulated cases. This matches our expectations in that the reference points in PSVM are separately learned using an unsupervised method (Hyperbolic Graham Search) to locate the midpoint between two convex hulls, independent of the hyperplane training process. This step is sensitive to the train test split and does not guarantee that PSVM could identify the max-margin separating hyperplane even in the training set. All other methods, if they converge, could indeed find the max-margin separating hyperplane in respective manifolds. 
\begin{table}[htbp]
    \centering
    \caption{Mean accuracy and macro F1 score of datasets in \Cref{fig:svm_data}.}
    \resizebox{\columnwidth}{!}{%
    \begin{tabular}{cc|cc} \toprule
         & algorithm & accuracy (\%) & F1 \\ 
         \midrule
        \multirow{4}{*}{CIFAR-10} & ESVM & 91.88 $\pm$ 0.00 & 0.9191 $\pm$ 0.00 \\ 
         & LSVM & 91.88 $\pm$ 0.00 & 0.9189 $\pm$ 0.00 \\
         & PSVM & 91.81 $\pm$ 0.00 & 0.9182 $\pm$ 0.00 \\
         & LSVMPP & \textbf{91.96 $\pm$ 0.00} & \textbf{0.9197 $\pm$ 0.00} \\
         \midrule
        \multirow{4}{*}{fashion-MNIST} & ESVM & 86.37 $\pm$ 0.00 & 0.8665 $\pm$ 0.00 \\ 
         & LSVM & 71.59 $\pm$ 0.07 & 0.6588 $\pm$ 0.08 \\ 
         & PSVM & 86.57 $\pm$ 0.00 & 0.8665 $\pm$ 0.00 \\ 
         & LSVMPP & \textbf{89.49 $\pm$ 0.00} & \textbf{0.8955 $\pm$ 0.00} \\
         \midrule 
         \multirow{4}{*}{paul} & ESVM & 55.05 $\pm$ 0.00 & 0.4073 $\pm$ 0.00 \\ 
         & LSVM & 58.36 $\pm$ 0.07 & 0.4517 $\pm$ 0.00 \\ 
         & PSVM & 55.25 $\pm$ 0.00 & 0.3802 $\pm$ 0.00 \\ 
         & LSVMPP & \textbf{62.64 $\pm$ 0.05} & \textbf{0.5024 $\pm$ 0.00} \\
         \bottomrule
    \end{tabular}%
    }
    \label{tab:real_dataset_result_partial}
\end{table}

\section{Discussions}
In this paper, we have analyzed the representation limitations and numerical stability of optimization for {two popular} models and a Euclidean parametrization of the hyperbolic space under a 64-bit arithmetic system. 
We note that however, while the Euclidean parameterization has no limitation for point representation, computation of {certain functions induced from hyperbolic spaces,} such as the hyperbolic distance between points, may still lead to numerical problems and we leave this for future study. Another potential future direction is to investigate the use of different accelerated gradient descent methods on these models.  A recent study \citep{hamilton2021no} has shown that the negative curvature inherent to the hyperbolic space can impede the acceleration of the gradient descent process in a Nestorov-like way, citing the exponential volume growth property as a cause for the loss of information from past gradients. It would be interesting to explore in the future whether the Euclidean parametrization proposed in our paper could overcome this obstruction and improve the convergence rate of optimization problems in hyperbolic spaces.

Code for reproducing our experiments is available at \href{https://github.com/yangshengaa/stable-hyperbolic}{https://github.com/yangshengaa/stable-hyperbolic}.

\section*{Acknowledgements} 
This work is partially supported by NSF grants CCF-2112665, CCF-2217058, and NIH grant RF1MH125317-01.


 
\bibliography{rep}
\bibliographystyle{icml2023}


\newpage
\appendix
\onecolumn
\section{Proofs}

\subsection{Proof of \Cref{prop:p radius}}\label{proof:p radius}
Note that $\mathrm{arccosh}(x)=\ln(x+\sqrt{x^2-1})$. This implies that
$$\mathrm{arccosh}\left(1+\frac{2\|x\|^2}{1-\|x\|^2}\right)=\mathrm{arccosh}\left(\frac{1+\|x\|^2}{1-\|x\|^2}\right)=\ln\left(\frac{1+\|x\|^2}{1-\|x\|^2}+\sqrt{\frac{(1+\|x\|^2)^2}{(1-\|x\|^2)^2}-1}\right)=\ln\left(\frac{1+\|x\|^2+2\|x\|}{1-\|x\|^2}\right),$$
and then $\ln\left(\frac{1+\|x\|^2+2\|x\|}{1-\|x\|^2}\right)=\ln\left(\frac{1+\|x\|}{1-\|x\|}\right)=\ln(1+\|x\|)-\ln(1-\|x\|)$.

By using the formula for the Poincar\'{e} distance between $0$ and $x$, we have that
\begin{align*}
    d_{\mathbb{D}^n}(0,x)&=\mathrm{arccosh}\left(1+ \frac{2\norm{x}^2}{1-\norm{x}^2}\right)\\
    &=\ln\left(1+\norm{x}\right)-\ln\left(1-\norm{x}\right)\\
    &=\ln(2-10^{-k})-\ln(10^{-k})\\
    &=\ln(10)k+\ln(2) +\ln(1-(2*10^k)^{-1})\\
    &=\ln(10)k+\ln(2) +O(10^{-k}).
\end{align*}

\subsection{Proof of \Cref{prop:l radius}}\label{proof:l radius}
By using the formula for the Lorentz distance between $\bar{0}$ and $x$, we have that
\begin{align*}
    d_{\mathbb{L}^n}(\bar{0},x)&=\mathrm{arccosh}(-[\bar{0},x])\\
    &=\mathrm{arccosh}(x_0)=\ln\left(x_0+\sqrt{x_0^2-1}\right)\\
    &=\ln(x_0)+\ln\left(1+\sqrt{1-1/x_0^2}\right)\\
    &=\ln(x_0)+\ln\left(2+\frac{1}{2x_0^2}+O(x_0^{-4})\right)\\
    &=\ln(10)k+\ln(2) +O(10^{-2k}).
\end{align*}

\subsection{Proof of \Cref{prop:same eta}}
Given a Riemannian isometry $\iota:M\rightarrow N$ between two manifolds, for any $p\in M$, one has that 
$$\iota\circ\exp_p=\exp_{\iota(p)}\circ D\iota_p,$$
where $D\iota_p:T_pM\rightarrow T_pN$ denotes the differential at $p$. 

Now, by the Riemannian chain rule for gradient, one has that 
\[\nabla_{\mathbb{D}^n} f(x)=\nabla_{\mathbb{D}^n}(g\circ\varphi)(x)=(D\varphi_x)^*\circ \nabla_{\mathbb{L}^n} g(y),\]
where $y=\varphi(x)$. Since $\varphi$ is an isometry, one has that $(D\varphi_x)^*=D\psi_{y}:T_{y}\mathbb{L}^n\rightarrow T_x\mathbb{D}^n$, where $\psi:=\varphi^{-1}$. In this way, $D\psi_{y}(-\eta\nabla_{\mathbb{L}^n} g(y))=-\eta \nabla_{\mathbb{D}^n} f(x)$ and thus
$$\psi(\exp_y(-\eta\nabla_{\mathbb{L}^n} g(y))= \exp_{\psi(y)}(D\psi_{y}(-\eta\nabla_{\mathbb{L}^n} g(y)))=\exp_{x}(-\eta\nabla_{\mathbb{D}^n} f(x)).$$

\subsection{Proof of \Cref{lm:gradient norm}}\label{proof:gradient norm}
The Poincar\'{e} gradient $\nabla_{\D^n} f$ at $y$ can be computed explicitly as follows:
\[  \nabla_{\D^n} f(x)=\lambda_x^{-2}\nabla f(x)= \frac{(1-\norm{x}^2)^2}{4}(\partial_1f(x),\ldots,\partial_nf(y))^\mathrm{T}.\]
Hence, 
\[\norm{\nabla_{\D^n} f(x)}= \Omega\left(\frac{(1-\norm{x}^2)^2}{4}\norm{\nabla f(x)}\right).\]

As for the Lorentz gradient, we apply the following lemma

\begin{lemma}\label{lm: differential}
For any $x\in\D^n$ and any $v=(v_1,\ldots,v_n)\in T_x\D^n$, by letting $y:=\varphi(x)\in\LL^n$, we can write down the image of $v$ under the differential $D\varphi$ explicitly as follows:

\begin{equation}\label{eq:p to l gradient}
    D\varphi(v) =\left(\sum_{i=1}^ny_iv_i(1+y_0),v_1(1+y_0)+\sum_iy_iv_i\cdot y_1,\ldots,v_n(1+y_0)+\sum_iy_iv_i\cdot y_n\right).
\end{equation}
\end{lemma}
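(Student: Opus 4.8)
The plan is to compute the Jacobian of the explicit map $\varphi$ componentwise and then apply it to $v$, converting the result from $x$-coordinates to $y$-coordinates only at the end. Set $\mu := 1 - \norm{x}^2$, so that $\partial \mu/\partial x_k = -2x_k$. Since $\varphi_0(x) = \frac{\norm{x}^2+1}{\mu} = \frac{2}{\mu}-1$ and $\varphi_j(x) = \frac{2x_j}{\mu}$ for $1 \le j \le n$, a one-line differentiation gives $\partial_k\varphi_0 = \frac{4x_k}{\mu^2}$ and $\partial_k\varphi_j = \frac{2\delta_{jk}}{\mu} + \frac{4x_jx_k}{\mu^2}$. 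Assembling the directional derivative, $(D\varphi(v))_0 = \sum_k \frac{4x_k}{\mu^2} v_k$ and $(D\varphi(v))_j = \frac{2v_j}{\mu} + \frac{4x_j}{\mu^2}\sum_k x_k v_k$.

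The second step is bookkeeping: rewrite these in terms of $y = \varphi(x)$. The only two identities needed are $1+y_0 = \frac{2}{\mu}$ (equivalently $\mu = \frac{2}{1+y_0}$, which is exactly the conformal factor $\lambda_x$) and $x_k = \frac{\mu}{2}y_k$. Substituting, each coefficient collapses cleanly: $\frac{4x_k}{\mu^2} = y_k(1+y_0)$, $\frac{2}{\mu} = 1+y_0$, and $\frac{4x_jx_k}{\mu^2} = y_jy_k$. This turns the zeroth component into $(1+y_0)\sum_i y_iv_i$ and the $j$-th component into $v_j(1+y_0) + y_j\sum_i y_iv_i$, which is precisely \Cref{eq:p to l gradient}.

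There is no genuine obstacle here: the computation is a direct chain-rule exercise, and since $\varphi$ is a concrete smooth map between submanifolds of Euclidean spaces, $D\varphi$ is literally the above Jacobian restricted to $T_x\D^n \cong \R^n$, so nothing further is required for the statement. The only points that need care are keeping the free index $j$ separate from the summation index, and not dropping the factor $1+y_0 = \lambda_x$ — the very conformal factor whose omission elsewhere in the literature is flagged in \Cref{rmk:2} and \Cref{rmk:shimizu}. As an optional sanity check one can verify $[y, D\varphi(v)] = 0$, i.e. that the output indeed lies in $T_y\LL^n$, by differentiating the constraint $[y,y] = -1$ along a curve through $x$.
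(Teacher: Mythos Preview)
Your proof is correct, but it takes a genuinely different route from the paper's. You compute the Jacobian of $\varphi$ directly from its explicit coordinate formula and then rewrite the result in terms of $y$ via the substitutions $1+y_0=2/\mu$ and $x_k=\mu y_k/2$. The paper instead starts from the known expression for the differential of the inverse map $D\psi$ (quoted from \citep{wilson2018gradient}) and simply \emph{verifies} that the right-hand side of \Cref{eq:p to l gradient}, plugged into $D\psi$, returns $v$; i.e., it checks $D\psi\circ D\varphi=\mathrm{id}$ rather than computing $D\varphi$ from scratch. Your approach is self-contained and shows where the formula comes from, at the cost of a bit more bookkeeping; the paper's approach is shorter but relies on an external citation and only confirms the formula once it has been guessed. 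Both are perfectly valid for this elementary lemma. One small wording quibble: in your parenthetical, it is $1+y_0=2/\mu$ that equals the conformal factor $\lambda_x$, not $\mu$ itself --- you use this correctly later, so it is only a phrasing slip.
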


\begin{proof}[Proof of \Cref{lm: differential}]
    The differential of $\psi:\LL^n\rightarrow\D^n$, the inverse of $\varphi$, was already derived in Equation (9) of \citep{wilson2018gradient}: for $w=(w_0,\ldots,w_n)\in T_y\LL^n$, and for any $1\leq i\leq n$
    \[(D\psi(w))_i=\frac{1}{1+y_0}\left(w_i-\frac{y_iw_0}{y_0+1}\right).\]
    We then simply verify that the map defined in \Cref{eq:p to l gradient} is the inverse of $D\psi$. Let $w$ be the right hand side of \Cref{eq:p to l gradient}. Then, for any $1\leq i\leq n$
    \begin{align*}
        (D\psi(w))_i&=\frac{1}{1+y_0}\left(w_i-\frac{y_iw_0}{y_0+1}\right)\\
        &=\frac{1}{1+y_0}\left(v_i(1+y_0)+\sum_jy_jv_j\cdot y_i-\frac{y_i}{y_0+1}\sum_{j=1}^ny_jv_j(1+y_0)\right)\\
        &=v_i.
    \end{align*}
    Hence, \Cref{eq:p to l gradient} holds.
\end{proof}

As the map $\varphi:\D^n\rightarrow\LL^n$ is an isometry, by the lemma above, the Lorentz gradient at $y$ can be written down explicitly: 
\begin{align*}
    &\nabla_{\LL^n} g(y) = D\varphi (\nabla_{\D^n}f(x))=D\varphi \left(\frac{(1-\norm{x}^2)^2}{4}\nabla f(x)\right)=D\varphi \left(\frac{1}{(1+y_0)^2}\nabla f(x)\right)\\
    &=\left(\frac{\sum_{i=1}^ny_i\partial_i f(x)}{1+y_0},\frac{\partial_1f(x)(1+y_0)+\sum_iy_i\partial_if(x)\cdot y_1}{(1+y_0)^2},\ldots,\frac{\partial_nf(x)(1+y_0)+\sum_iy_i\partial_if(x)\cdot y_n}{(1+y_0)^2}\right)^\mathrm{T}.
\end{align*}
Note that we use column vector representation in the above equation to indicate that the vector denotes a gradient. In the third equation, we used the fact that $1+y_0=1+\frac{1+\norm{x}^2}{1-\norm{x}^2}=\frac{2}{1-\norm{x}^2}$.
Then, 
\[\norm{\nabla_{\LL^n} g(y)} = \frac{\sqrt{2(\sum y_i\partial_if(x))^2+\sum(\partial_if(x))^2)}}{1+y_0}
    \leq \frac{\sqrt{2y_0^2-1}}{1+y_0}\norm{\nabla f(x)} =O(\norm{\nabla f(x)}).\]

\subsection{Proof of \Cref{thm:gd p vs l}}\label{proof:thm1}
Recall that in the Poincar\'{e} ball, the exponential map can be expressed as follows:
\[\exp_x(v)=x\oplus \tanh\left(\frac{\lambda_x}{2}\norm{v} \right)\frac{v}{\norm{v} }, \]
where $\lambda_x=\frac{2}{1-\norm{x} ^2}$ and
\begin{equation}\label{eq: poincare exp}
    x\oplus y =\frac{(1+2\langle x,y\rangle+\norm{y} ^2)x+(1-\norm{x} ^2)y}{1+2\langle x,y\rangle+\norm{x} ^2\norm{y} ^2}.
\end{equation}
When $x=(1-\delta,0,\ldots,0)$, $\lambda_x=\frac{2}{\delta(2-\delta)}$. 
We let $E:= \norm{\nabla  f(x)} $.
Then,
\begin{align*}
    \exp_x(-\eta\nabla_{\D^n} f(x))&=x\oplus \tanh\left(\frac{\lambda_x\eta}{2}\norm{\nabla_{\D^n} f(x)} \right)\frac{-\nabla_{\D^n} f(x)}{\norm{\nabla_{\D^n} f(x)} }\\
    &=x\oplus \tanh\left(\frac{\lambda_x\eta}{2}\norm{\nabla_{\D^n} f(x)} \right)\frac{-\nabla_{\D^n} f(x)}{\norm{\nabla_{\D^n} f(x)} }\\
    &=x\oplus \tanh\left(\frac{\eta}{2\lambda_x}\norm{\nabla  f(x)} \right)\frac{-\nabla  f(x)}{\norm{\nabla  f(x)} }\\
    &=x\oplus \tanh\left(\frac{\eta E \delta(2-\delta)}{4}\right)\frac{-\nabla  f(x)}{\norm{\nabla  f(x)} }\\
    &=x\oplus \left(\frac{ 2-\delta}{4}\eta E\delta+O((\eta E\delta)^3)\right)\frac{-\nabla  f(x)}{\norm{\nabla  f(x)} }.
\end{align*}
We use the Taylor expansion of $\tanh$ at $0$ in the last equation. Now, using \Cref{eq: poincare exp}, we have that
\begin{align*}
    &x\oplus \left(\frac{2-\delta}{4}\eta E \delta+O((\eta E \delta)^3)\right)\frac{-\nabla  f(x)}{\norm{\nabla  f(x)}}\\
    =&\frac{(1-2(1-\delta)(\frac{2-\delta}{4}\eta E\delta+O((\eta E\delta)^3))+O((\eta E\delta)^2))(1-\delta,0,\ldots,0)+\delta(2-\delta)(\frac{2-\delta}{4}\eta E\delta+O((\eta E\delta)^2))(1,0,\ldots,0)}{1-2(1-\delta)(\frac{2-\delta}{4}\eta E\delta+O((\eta E\delta)^3))+(1-\delta)^2O((\eta E\delta)^2)}\\
    =&\frac{\big(1-2(1-\delta)\frac{2-\delta}{4}\eta E\delta+O((\eta E\delta)^2))(1-\delta)+O(\eta E\delta ^2),0,\ldots,0\big)}{1-2(1-\delta)\frac{2-\delta}{4}\eta E\delta+O((\eta E\delta)^2)}\\
    =&(1-\delta +O(\eta E\delta^2),0,\ldots,0).
\end{align*}

As for the Lorentz model, we have that
\begin{align*}
    &\exp_y(-\eta\nabla_{\LL^n}g(y))=\cosh(\eta\norm{\nabla_{\LL^n}g(y)}_{\LL^n})y+\sinh(\eta\norm{\nabla_{\LL^n}g(y)}_{\LL^n})\frac{-\nabla_{\LL^n}g(y)}{\norm{\nabla_{\LL^n}g(y)}_{\LL^n}}\\
    =&\cosh(\eta\norm{\nabla_{\D^n}f(x)}_{\D^n})y+\sinh(\eta\norm{\nabla_{\D^n}f(x)}_{\D^n})\frac{-\nabla_{\LL^n}g(y)}{\norm{\nabla_{\D^n}f(x)}_{\D^n}}\\
    =&\cosh(\frac{(2-\delta)}{2}\eta E\delta)\big(\frac{2-2\delta+\delta^2}{\delta(2-\delta)},\frac{2-2\delta}{\delta(2-\delta)},0,\ldots,0\big)\\
    -&\frac{\sinh(\frac{(2-\delta)}{2}\eta E\delta)}{\frac{(2-\delta)}{2} E\delta}\big((1-\delta)\partial_1f(x),\frac{2-2\delta+\delta^2}{2}\partial_1 f(x),0,\ldots,0\big)\\
    =&(1+O((\eta E\delta)^2))\big(\frac{2-2\delta+\delta^2}{\delta(2-\delta)},\frac{2-2\delta}{\delta(2-\delta)},0,\ldots,0\big)+(\eta + O(\eta^2(E\delta)^3))\big((1-\delta)E,\frac{2-2\delta+\delta^2}{2}E,0,\ldots,0\big)\\
    =&\big(\frac{1}{\delta}-\frac{1}{2}+\eta E+O(\eta E\delta),\frac{1}{\delta}-\frac{1}{2}+\eta E+O(\eta E\delta),0,\ldots,0\big)
\end{align*}

Finally, we substitute $\delta$ with $10^{-k}$ to obtain 
\begin{align*}
    \exp_x(-\eta\nabla_{\D^n} f(x))&= (1-10^{-k} +O(\eta E10^{-2k}),0,\ldots,0), \\
    \exp_y(-\eta\nabla_{\LL^n} g(y))&= \Large(10^k-\frac{1}{2}+\eta E+O(\eta E10^{-k}), 10^k-\frac{1}{2}+\eta E+O(\eta E10^{-k}),0,\ldots,0\large).
\end{align*} 

\subsection{Proof of \Cref{thm:euclidean}}\label{proof:thm2}
Similarly as in the proof of \Cref{proof:thm1}, we denote $E := \norm{\nabla f(x)}$. We first calculate the gradient $\nabla h$. We let $\mathrm{J}_{F_D}$ denote the Jacobian of $F_D$. Then, the differential of $h$ can be computed via the chain rule
\[Dh(z)=Df(x)\cdot J_{F_D}(z)\]
Therefore, as for the gradient, we have that
\begin{equation}\label{eq:gradient of composition}
    \nabla h(z) = (Dh(z))^\mathrm{T}= (\mathrm{J}_{F_D}(z))^\mathrm{T}\nabla f(x),
\end{equation}
where we use the fact that $\nabla f(x)= (Df(x))^\mathrm{T}$.

\begin{remark}
    Of course, one also can apply the chain rule formula for Riemannian gradient directly as follows 
    \[\nabla h(z)= (D{F_D}(z))^*\cdot \nabla_{\D^n}f(x),\]
    where $(D{F_D}(z))^*$ denotes the adjoint of the differential map $D{F_D}(z):T_z\R^n\rightarrow T_x\D^n$, which is represented by a matrix. Given that the metric on $T_x\D^n$ is $\lambda_x^2h_x$ where $h_x$ denotes the Euclidean metric, we have that the matrix representation of $(D{F_D}(z))^*$ is $(\mathrm{J}_{F_D}(z))^\mathrm{T}\lambda_x^2$. Hence 
    \[\nabla h(z) =(\mathrm{J}_{F_D}(z))^\mathrm{T}\lambda_x^2\cdot \underbrace{\lambda_x^{-2}\nabla f(x)}_{A}=(\mathrm{J}_{F_D}(z))^\mathrm{T}\nabla f(x).\]
    This agrees with \Cref{eq:gradient of composition}.
\cite{guo2022clipped} claimed that functions of the form of $h$, i.e., the form $\R^n\xrightarrow[]{\exp_0}\D^n\rightarrow\R$, will suffer severe vanishing gradient issue since the term $A$ has a coefficient $\lambda^{-2}_x=\frac{(1-\norm{x}^2)^2}{4}$ vanishing as $x$ is close to the boundary of $\D^n$. However, as one can see clearly from the above formula, the coefficient $\frac{(1-\norm{x}^2)^2}{4}$ will be canceled when considering the map $F_D$ (or equivalently speaking, the exponential map $\exp_0$). Hence, in order to correctly analyze the magnitude of $\nabla h$, one needs to compute the Jacobian matrix which is what we do next in a special case.
\end{remark}

Let $w:=\mathrm{arctanh}(1-\delta)$. Then, $z=(2w,0,\ldots,0)$ and the Jacobian of $F_D:\R^n\rightarrow\D^n$ at $z=(2w,0,\ldots,0)$ is
\[\mathrm{J}_{F_D}(z) = \frac{\tanh(w)}{2w}*\mathrm{I}_{n\times n}+
\begin{bmatrix}
            \frac{1}{2}(1-\mathrm{tanh}^2(w)-\frac{\tanh(w)}{w}) & 0 &\dotsc &0\\
            0 & 0 &\dotsc &0\\
            \vdots & \vdots & \ddots & \vdots\\
            0 & 0 & \dotsc & 0
        \end{bmatrix}.\]
Hence, 
\begin{align*}
   \nabla h(z)&=(\mathrm{J}_{F_D}(z))^\mathrm{T}\nabla f(x)=\left( \frac{\tanh(w)}{2w}*\mathrm{I}_{n\times n}+\begin{bmatrix}
            \frac{1}{2}(1-\mathrm{tanh}^2(w)-\frac{\tanh(w)}{w}) & 0 &\dotsc &0\\
            0 & 0 &\dotsc &0\\
            \vdots & \vdots & \ddots & \vdots\\
            0 & 0 & \dotsc & 0
        \end{bmatrix}\right)\begin{bmatrix}
            \partial_1f(x)\\
            0 \\
            \vdots \\
            0 
        \end{bmatrix}\\
        &=\left[\frac{\partial_1f(x)}{2}(1-\mathrm{tanh}^2(w)),0,\ldots,0\right]^\mathrm{T}.
\end{align*}

Therefore, the first component of $z-\eta\nabla h(z)$ is such that
\begin{align*}
    &2w + \frac{\eta E}{2}(1-\mathrm{tanh}^2(w))\\
    =&2w+\frac{\eta E}{2}(1-(1-\delta)^2)\\
    =&\ln\left(\frac{2}{\delta}\right)-\frac{\delta}{2}+O(\delta^2)+\frac{\eta E}{2}(2\delta-\delta^2)\\
    =&\ln\left(\frac{2}{\delta}\right)+({\eta E}-1/2)\delta+O(\delta^2).
\end{align*}
Note that in the second equation we use the expansion that
\[\mathrm{arctanh}(1-\delta)=\frac{1}{2}\ln\left(\frac{2}{\delta}\right)-\frac{\delta}{4}+O(\delta^2).\]

Finally, we substitute $\delta$ with $10^{-k}$ to obtain 
\[z-\eta\nabla h(z)=\ln(10)k+\ln(2)+({\eta \norm{\nabla f(x)}}-1/2)10^{-k}+O(10^{-2k}).\]

\subsection{Derivation of \Cref{eq:vector parametrization}}\label{proof:derivation}
In the hyperbolic space $\mathbb{H}^n$, any two points can be connected by a unique geodesic. Using the Lorentz model,  for any $x,y\in\LL^n$, the parallel transport map along the geodesic connecting $x$ and $y$ can be written down explicitly as follows: for $v\in T_x\LL^n$, one has that
\[\text{PT}_{x \mapsto y}(v) = v + \frac{[y, v]}{1 - [x,y]}(x + y).\]
As usual we define $\bar{0} := [1, 0, ..., 0] \in \mathbb{R}^{n + 1}$ to be the origin of the Lorentz model. Then
\begin{align*}
    w&=\text{PT}_{\bar{0}\mapsto p}(\bar{z})=\bar{z}+\frac{[p,\bar{z}]}{1-[\bar{0},p]}(\bar{0}+p)\\
    &=(0,z)+\frac{\sinh(a)\norm{z}}{1+\cosh(a)}\left(1+\cosh(a),\sinh(a)\frac{z}{\norm{z}}\right)\\
    &=\left(\sinh(a)\norm{z},z+\frac{\sinh^2(a)}{1+\cosh(a)}z\right)\\
    &=\left(\sinh(a)\norm{z},z+\frac{\cosh^2(a)-1}{1+\cosh(a)}z\right)\\
    &=\left(\sinh(a)\norm{z},\cosh(a)z\right).
\end{align*}

\section{Tree Optimization}\label{sec:tree opt}

In this section, we provide more details on the tree optimization experiments.

\subsection{Synthetic Data Generating Processing}\label{subsec:syn_tree_dgp}
We prefer to initialize, center, and normalize trees with Euclidean coordinates in a prescribed bounded region, serving as a data preprocessing procedure. Here we fit every simulated tree to be within a unit square. This ensures numerical stability in the initial exponential map into either $\D^n$ or $\LL^n$, and empirically has lower embedding distortion than the same ones with larger scales. The reason is that if the data is not centered or of a large scale, the subsequent exponential map brings features to much more distorted positions and poses challenges in the training steps.

\subsection{Full Results}
The Euclidean initialization of all tested trees are visualized in \Cref{tab:sim_tree_vis_full}. In  \Cref{tab:average_distortion_full}, we present all embedding results in distortion $\delta$, max distortion $\delta^{\max}$, and embedding diameters $d$.

\begin{figure*}[htbp]
    \centering
    \begin{subfigure}
        \centering
        \includegraphics[width=0.23\textwidth]{images/sim_tree_vis_404.png}
    \end{subfigure} 
    \hfill
    \begin{subfigure}
        \centering
        \includegraphics[width=0.23\textwidth]{images/sim_tree_vis_407.png}
    \end{subfigure} 
    \hfill
    \begin{subfigure}
        \centering
        \includegraphics[width=0.23\textwidth]{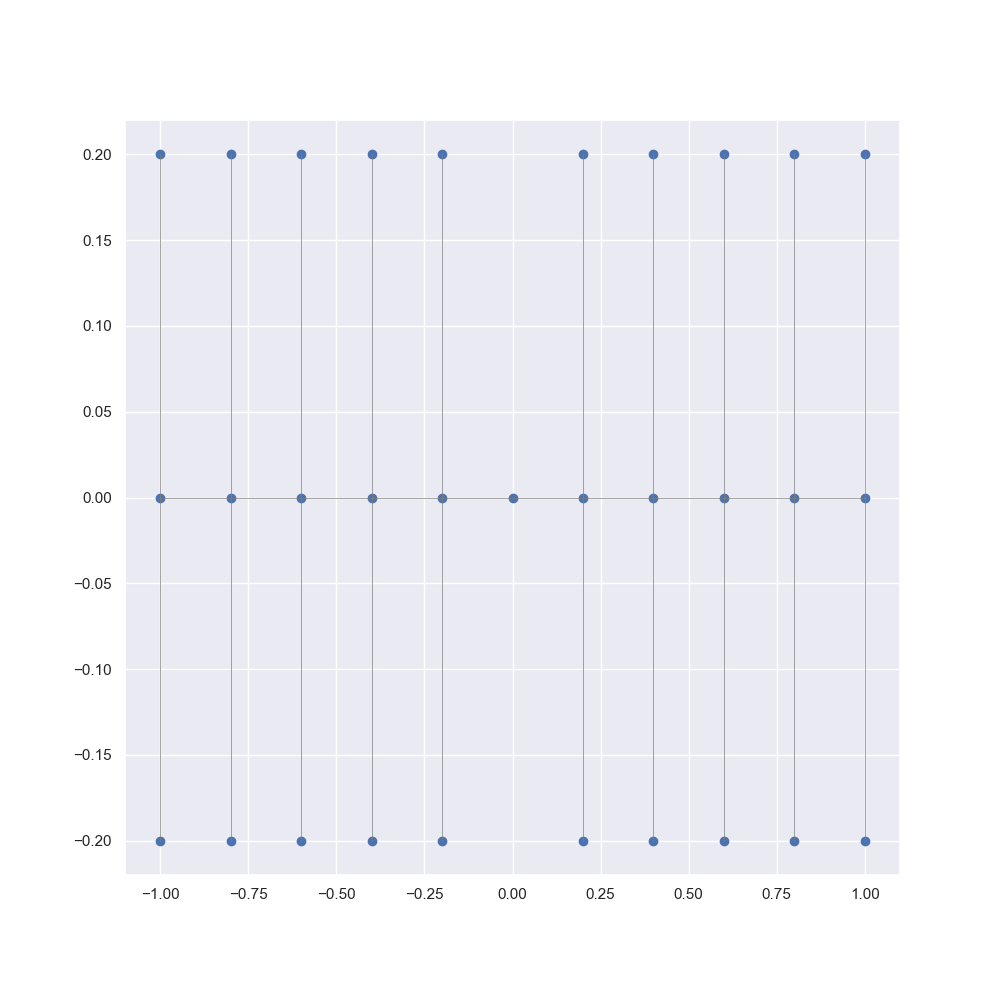}
    \end{subfigure} 
    \hfill
    \begin{subfigure}
        \centering
        \includegraphics[width=0.23\textwidth]{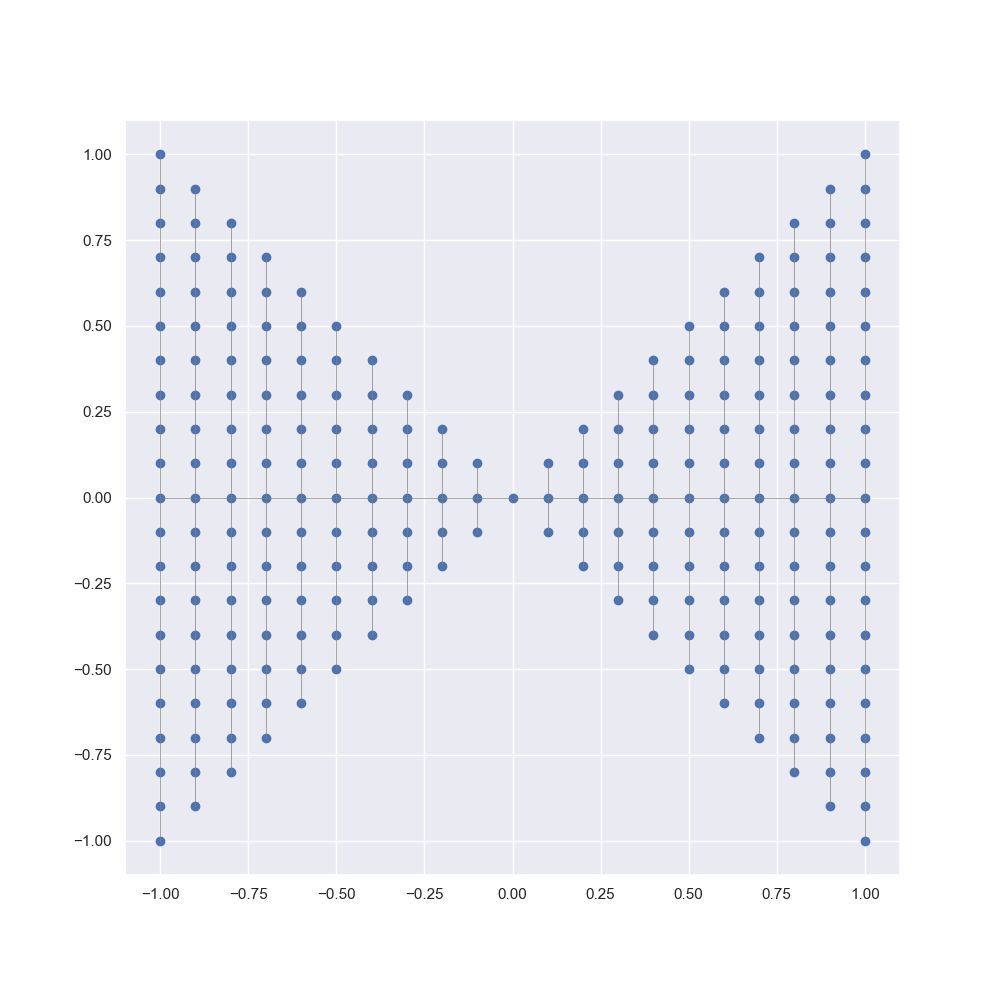}
    \end{subfigure} \\

    \begin{subfigure}
        \centering
        \includegraphics[width=0.23\textwidth]{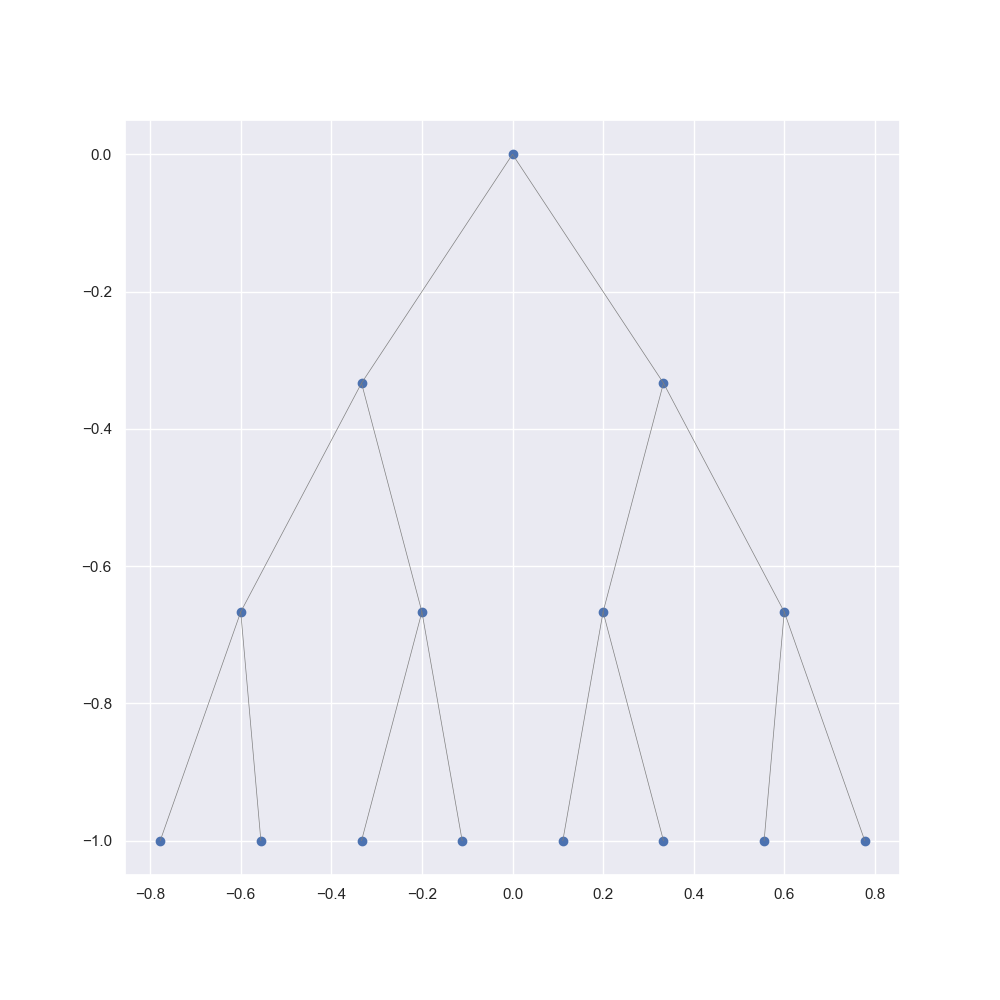}
    \end{subfigure} 
    \hfill
    \begin{subfigure}
        \centering
        \includegraphics[width=0.23\textwidth]{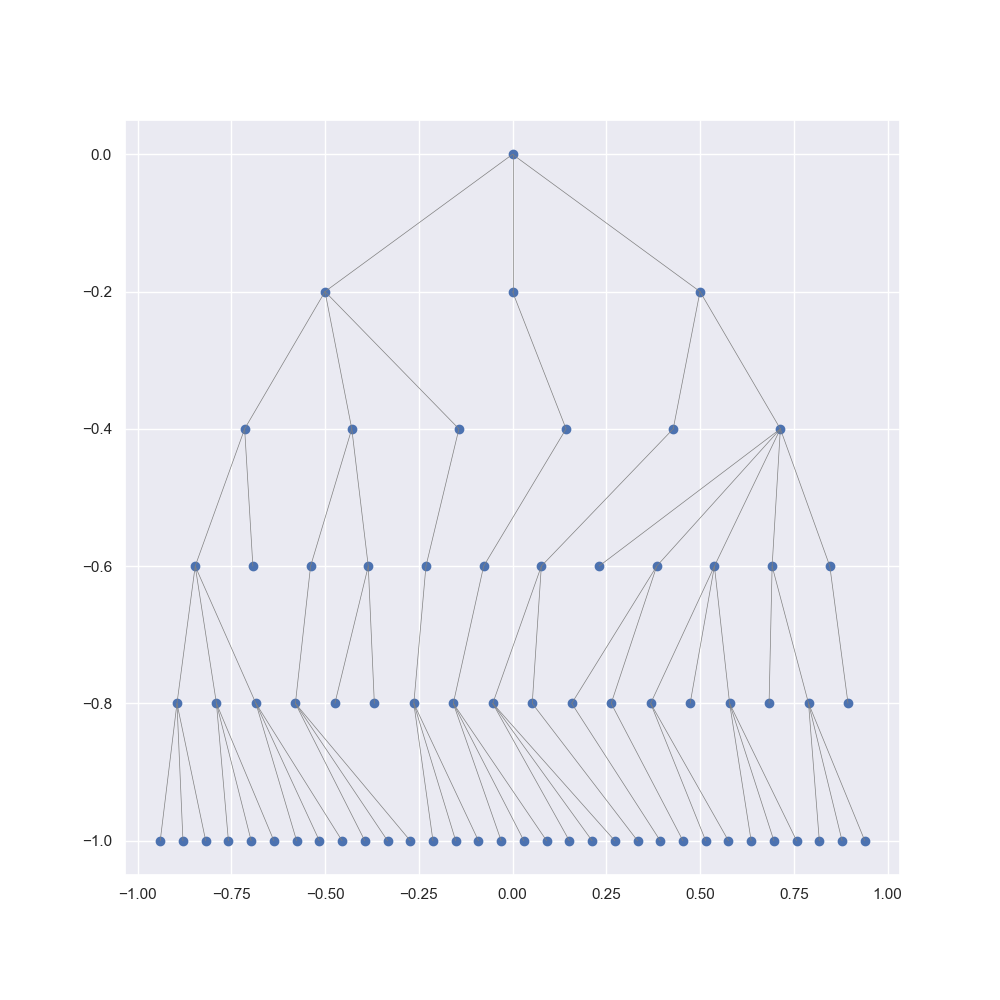}
    \end{subfigure} 
    \hfill
    \begin{subfigure}
        \centering
        \includegraphics[width=0.23\textwidth]{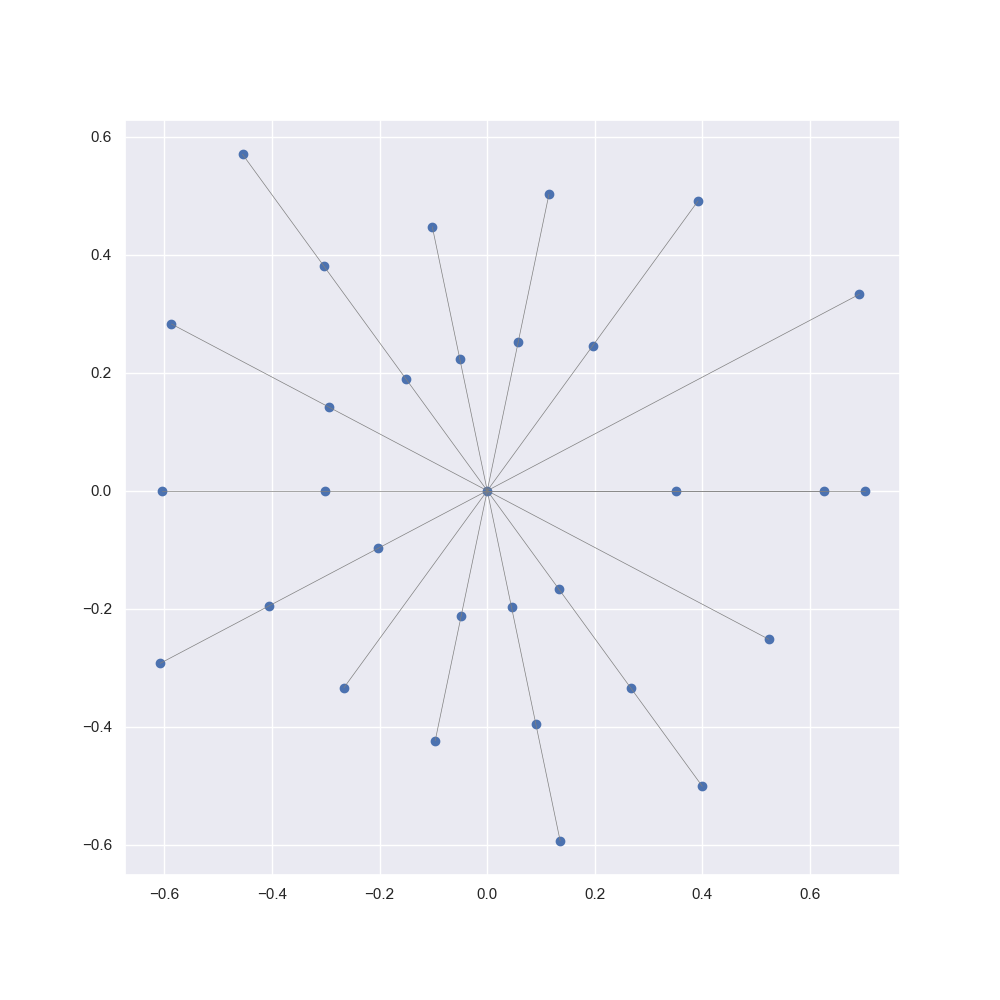}
    \end{subfigure} 
    \hfill
    \begin{subfigure}
        \centering
        \includegraphics[width=0.23\textwidth]{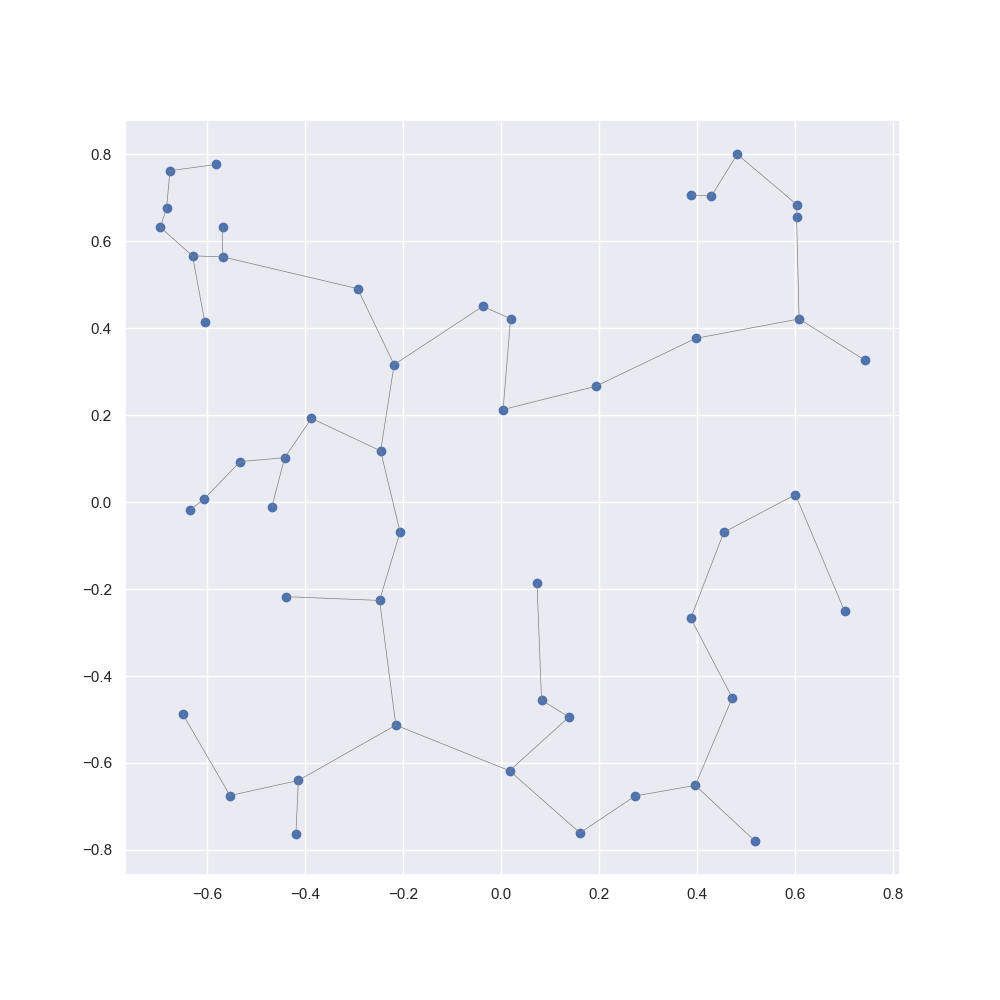}
    \end{subfigure} \\

    \caption{Simulated Trees in $\mathbb{R}^2$. We label figures by Tree 1 - 4 (top row) and Tree 5 - 8 (bottom row). The 2D coordinates of each node are features and pairwise distances computed through shortest path distance on the connected graph.}
    \label{tab:sim_tree_vis_full}
\end{figure*}
\begin{table*}[htbp]
    \centering
    \caption{Average distortion $\delta$, max distortion $\delta^{\max}$, and diameter $d$ by Dataset created in \Cref{tab:sim_tree_vis_full} and Manifold} 
    \begin{tabular}{cc|ccc||cc|ccc}\toprule
        tree & manifold & $\delta$ & $\delta^{\max}$ & $d$ & tree & manifold & $\delta$ & $\delta^{\max}$ & $d$ \\
        \midrule
        \multirow{4}{*}{1} & raw            & 1.1954 & 10.6062 & 2.8284 & 
        \multirow{4}{*}{5} & raw            & 1.3349 & 12.7279 & 1.4142 \\
                           & $\mathbb{D}^2$ & 1.0462 & 4.0546 & 4.8740 & 
                           & $\mathbb{D}^2$ & 1.0176 & 2.0321 & 4.2523 \\
                           & $\mathbb{L}^2$ & 1.0176 & 2.4511 & 10.1827 & 
                           & $\mathbb{L}^2$ & 1.0029 & 1.2829 & 7.3850 \\
                           & $\mathbb{E}^2$ & \textbf{1.0157} & \textbf{2.3158} & \textbf{10.9019} & 
                           & $\mathbb{E}^2$ & \textbf{1.0019} & \textbf{1.1997} & \textbf{8.8504} \\
                           \midrule 
        \multirow{4}{*}{2} & raw            & 1.1186 & 14.1421 & 2.2361 & 
        \multirow{4}{*}{6} & raw            & 1.5080 & 88.8552 & 1.4142 \\
                           & $\mathbb{D}^2$ & 1.0589 & 7.1435 & 4.9757 & 
                           & $\mathbb{D}^2$ & 1.1243 & 50.4524 & 4.3168 \\
                           & $\mathbb{L}^2$ & 1.0180 & 3.2803 & 10.6719 & 
                           & $\mathbb{L}^2$ & 1.0392 & \textbf{21.9948} & 8.9204 \\
                           & $\mathbb{E}^2$ & \textbf{1.0134} & \textbf{2.7408} & \textbf{11.4504} &
                           & $\mathbb{E}^2$ & \textbf{1.0269} & 22.3743 & \textbf{9.8190} \\
                           \midrule 
        \multirow{4}{*}{3} & raw            & 1.0511 & 3.0000 & 2.0396 & 
        \multirow{4}{*}{7} & raw            & 1.2108 & 17.7132 & 0.8321 \\
                           & $\mathbb{D}^2$ & 1.0321 & 3.3830 & 6.3564 & 
                           & $\mathbb{D}^2$ & 1.2145 & 8.6454 & 5.0657 \\
                           & $\mathbb{L}^2$ & 1.0190 & 2.5506 & 9.6830 & 
                           & $\mathbb{L}^2$ & 1.0166 & 3.7655 & 19.1703 \\
                           & $\mathbb{E}^2$ & \textbf{1.0174} & \textbf{2.4350} & \textbf{10.2328} & 
                           & $\mathbb{E}^2$ & \textbf{1.0100} & \textbf{1.7825} & \textbf{19.8481} \\
                           \midrule
        \multirow{4}{*}{4} & raw            & 1.1539 & 19.0000 & 2.8284 & 
        \multirow{4}{*}{8} & raw            & 1.1421 & 10.7807 & 0.9220 \\
                           & $\mathbb{D}^2$ & 1.0754 & 14.1074 & 5.0453 & 
                           & $\mathbb{D}^2$ & 1.0324 & 9.7552 & 4.5670 \\
                           & $\mathbb{L}^2$ & 1.0421 & 8.2933 & 10.4028 & 
                           & $\mathbb{L}^2$ & 1.0157 & 9.2790 & 8.8324 \\
                           & $\mathbb{E}^2$ & \textbf{1.0292} & \textbf{6.8504} & \textbf{12.7596} & 
                           & $\mathbb{E}^2$ & \textbf{1.0148} & \textbf{8.4500} & \textbf{9.1712} \\
                           \bottomrule
    
    \end{tabular}%
    \label{tab:average_distortion_full}
\end{table*}

The hierarchy that the Euclidean parametrization is comparable with and slightly better than the Lorentz model and that the Lorentz model is better than the Poincar\'{e} ball persists in all tested datasets. We also visualize all embeddings at their final training stage in \Cref{tab:sim_tree_vis_emb_part1} and \Cref{tab:sim_tree_vis_emb_part2}. Note that Poincar\'{e} embeddings in general are more contracted than their alternatives, further demonstrating its optimization disadvantage. 

\begin{figure}[htbp]
    \centering
    \begin{subfigure}
        \centering
        \includegraphics[width=0.3\textwidth]{images/tree_404_poincare.png}
    \end{subfigure} 
    \hfill 
    \begin{subfigure}
        \centering
        \includegraphics[width=0.3\textwidth]{images/tree_404_lorentz.png}
    \end{subfigure} 
    \hfill 
    \begin{subfigure}
        \centering
        \includegraphics[width=0.3\textwidth]{images/tree_404_euclidean.png}
    \end{subfigure}  \\

    \begin{subfigure}
        \centering
        \includegraphics[width=0.3\textwidth]{images/tree_407_poincare.png}
    \end{subfigure} 
    \hfill 
    \begin{subfigure}
        \centering
        \includegraphics[width=0.3\textwidth]{images/tree_407_lorentz.png}
    \end{subfigure} 
    \hfill 
    \begin{subfigure}
        \centering
        \includegraphics[width=0.3\textwidth]{images/tree_407_euclidean.png}
    \end{subfigure}  \\

    \begin{subfigure}
        \centering
        \includegraphics[width=0.3\textwidth]{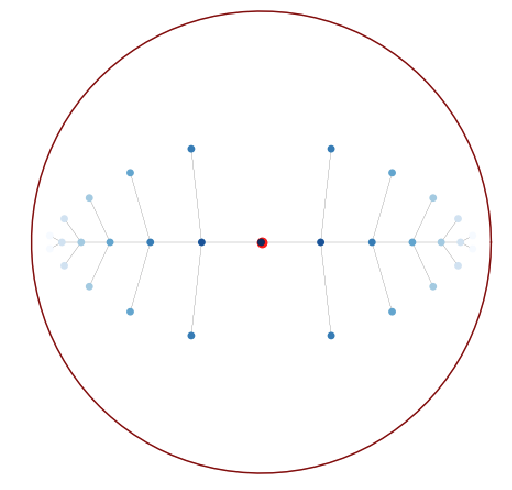}
    \end{subfigure} 
    \hfill 
    \begin{subfigure}
        \centering
        \includegraphics[width=0.3\textwidth]{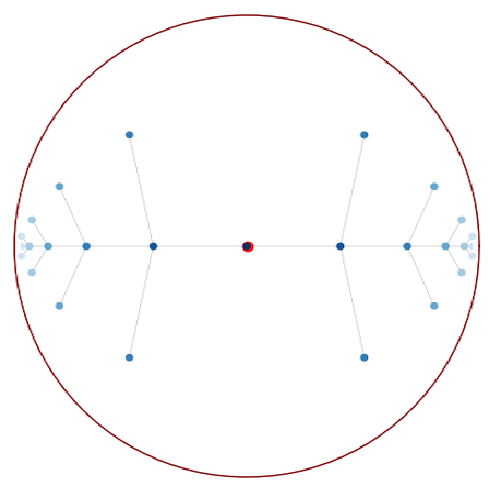}
    \end{subfigure} 
    \hfill 
    \begin{subfigure}
        \centering
        \includegraphics[width=0.3\textwidth]{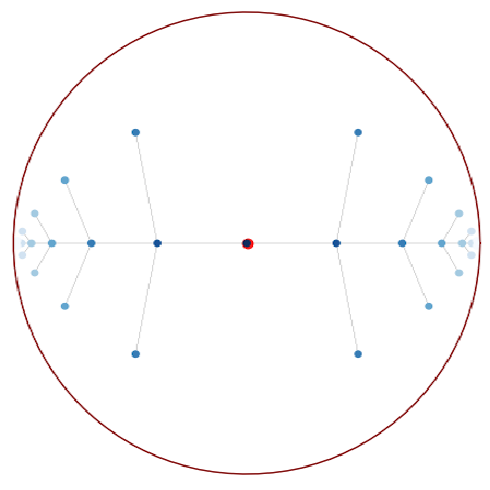}
    \end{subfigure}  \\

    \begin{subfigure}
        \centering
        \includegraphics[width=0.3\textwidth]{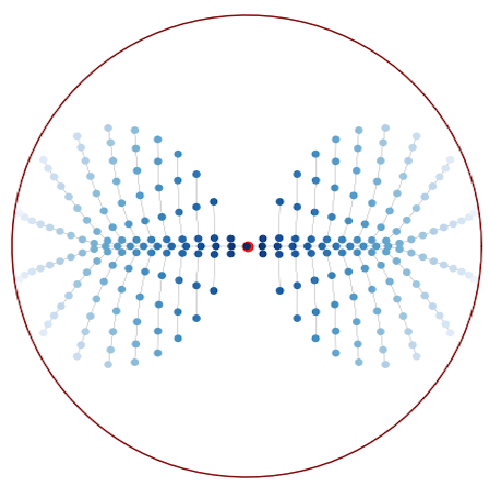}
    \end{subfigure} 
    \hfill 
    \begin{subfigure}
        \centering
        \includegraphics[width=0.3\textwidth]{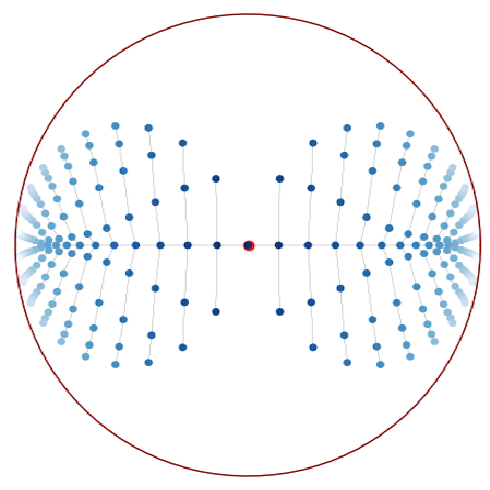}
    \end{subfigure} 
    \hfill 
    \begin{subfigure}
        \centering
        \includegraphics[width=0.3\textwidth]{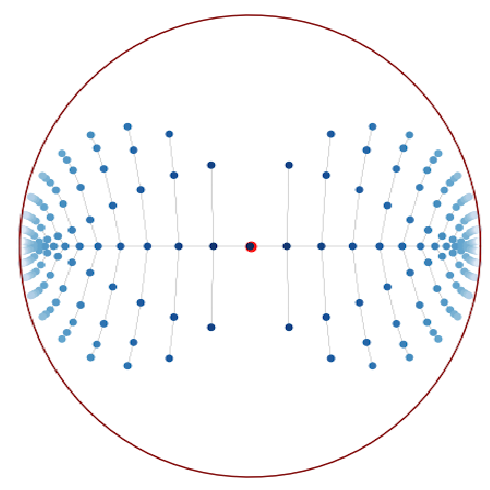}
    \end{subfigure}  \\
    \caption{Visualized Embeddings (Tree 1 - 4, from top to bottom row) at the final training epoch (3000). Each column from left to right correspond to Poincar\'{e}, Lorentz, and Euclidean parametrization.}
    \label{tab:sim_tree_vis_emb_part1}
\end{figure}
\begin{figure}[htbp]
    \centering
    \begin{subfigure}
        \centering
        \includegraphics[width=0.3\textwidth, height=0.3\textwidth]{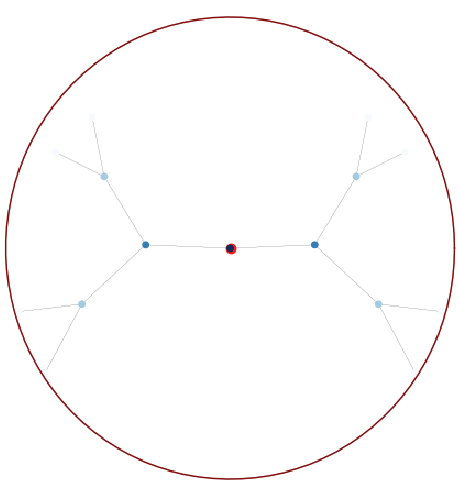}
    \end{subfigure} 
    \hfill 
    \begin{subfigure}
        \centering
        \includegraphics[width=0.3\textwidth, height=0.3\textwidth]{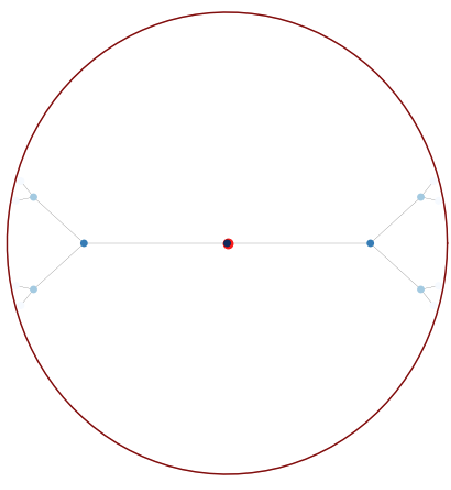}
    \end{subfigure} 
    \hfill 
    \begin{subfigure}
        \centering
        \includegraphics[width=0.3\textwidth, height=0.3\textwidth]{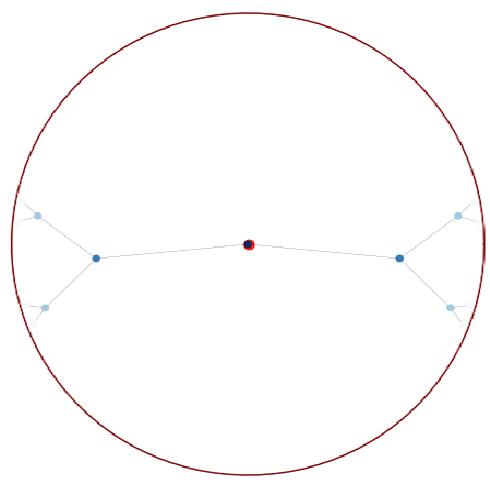}
    \end{subfigure}  \\

    \begin{subfigure}
        \centering
        \includegraphics[width=0.3\textwidth, height=0.3\textwidth]{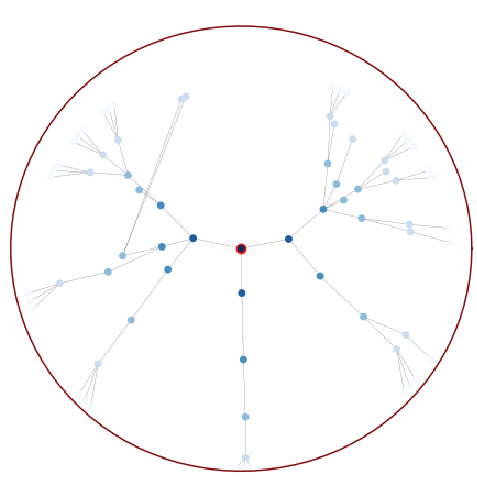}
    \end{subfigure} \hfill \begin{subfigure}
        \centering
        \includegraphics[width=0.3\textwidth, height=0.3\textwidth]{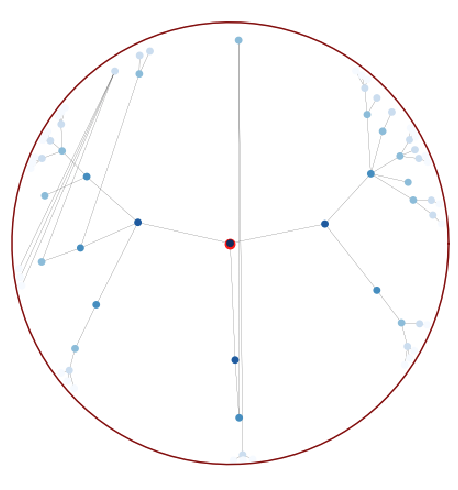}
    \end{subfigure} \hfill \begin{subfigure}
        \centering
        \includegraphics[width=0.3\textwidth, height=0.3\textwidth]{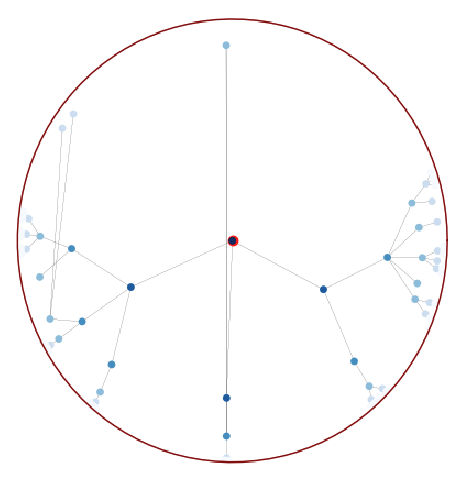}
    \end{subfigure}  \\

    \begin{subfigure}
        \centering
        \includegraphics[width=0.3\textwidth]{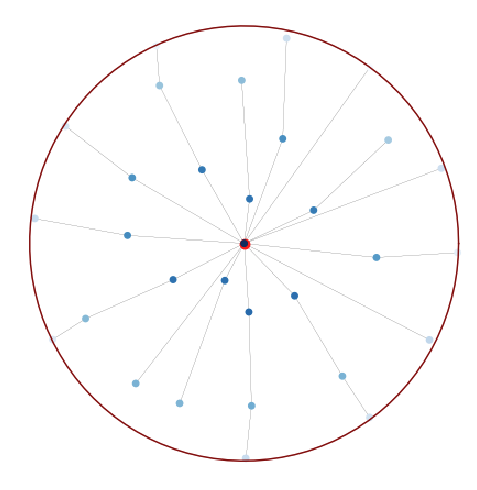}
    \end{subfigure} \hfill \begin{subfigure}
        \centering
        \includegraphics[width=0.3\textwidth]{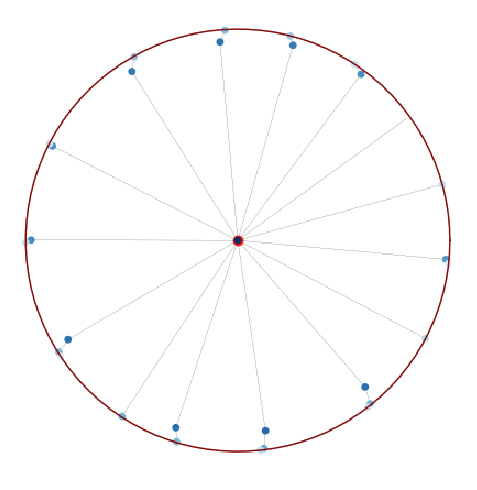}
    \end{subfigure} \hfill \begin{subfigure}
        \centering
        \includegraphics[width=0.3\textwidth]{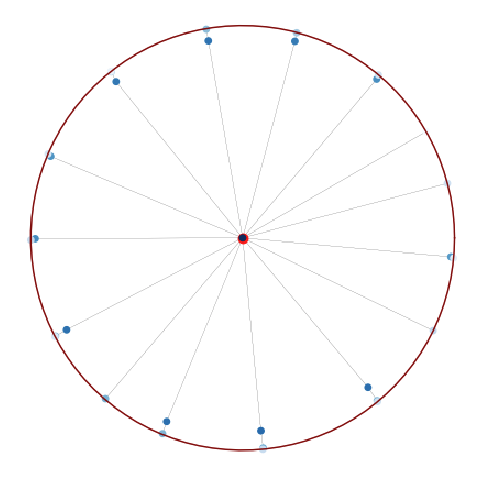}
    \end{subfigure}  \\

    \begin{subfigure}
        \centering
        \includegraphics[width=0.3\textwidth]{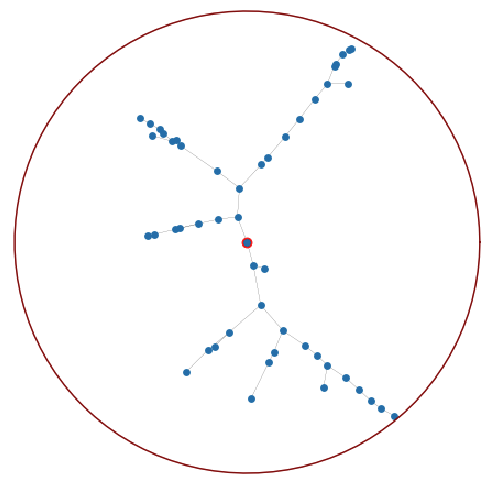}
    \end{subfigure} 
    \hfill 
    \begin{subfigure}
        \centering
        \includegraphics[width=0.3\textwidth]{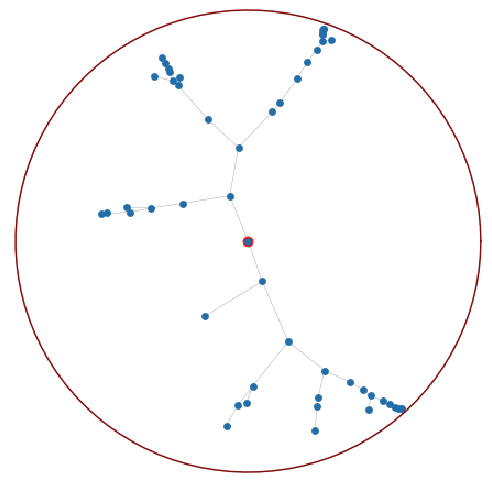}
    \end{subfigure}
    \hfill 
    \begin{subfigure}
        \centering
        \includegraphics[width=0.3\textwidth]{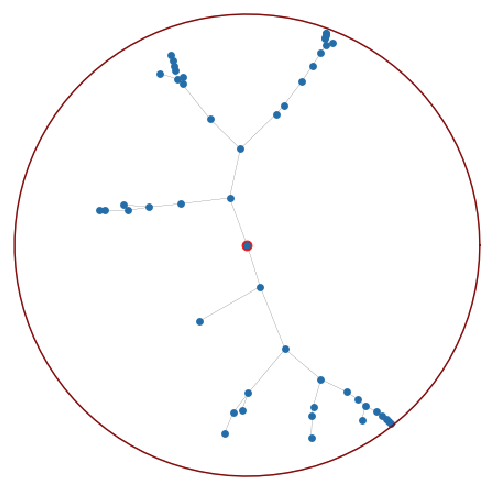}
    \end{subfigure}  \\
         
    \caption{Visualized Embeddings (Tree 5 - 8, from top to bottom row) at the final training epoch (3000). Each column from left to right correspond to Poincar\'{e}, Lorentz, and  Euclidean parametrization.}
    \label{tab:sim_tree_vis_emb_part2}
\end{figure}

\section{More on SVM}
In this section, we provide more details of our experiments on SVM.

\subsection{Overview of SVM in Binary Classification}

Support Vector Machine (SVM) is a machine learning task for both classification and regression by training a separating hyperplane. We first provide a brief overview of the Euclidean SVM and then introduce its variants in the hyperbolic space \citep{cho2019large,chien2021highly} using $l_2$ loss and a hinge-loss penalty. Readers may consider using $l_1$ loss or squared-hinge penalty, and other combinations as well. 

\paragraph{ESVM}
 In the classification setting, let $\{((x_i), (y_i)) \}_{i=1}^n$ be the training set and $x_i \in \mathbb{R}^n, y_i \in \{1, -1\}, \forall i \in \{1, ..., n\}$.
To obtain the separating hyperplane, in the soft-margin Euclidean SVM (ESVM), we minimize the margin between the hyperplane and the support vectors
$$
\underset{w \in \mathbb{R}^n}{\min} \; \frac{1}{2} \norm{w}_2^2 + C\sum_{i = 1}^n l(y_i \langle w, x_i \rangle)
$$
where $l$ is a hinge loss $l(z) = \max(0, 1 - z)$, and $C$ controls the tolerance of misclassification. 

\paragraph{LSVM}
Given a hyperbolic classification dataset, i.e., the dataset $\{((x_i,y_i)) \}_{i=1}^n$ satisfies that $x_i \in \mathbb{L}^n, y_i \in \{1, -1\}, \forall i \in \{1, ..., n\}$,
\cite{cho2019large} first formulated the soft-margin Lorentz SVM as follows:
\begin{align*}
    \underset{w \in \mathbb{R}^{n + 1}}{\min} \; &\frac{1}{2} \norm{w}_{\mathbb{L}^n}^2 + C\sum_{i = 1}^n l_{\mathbb{L}^n}(-y_i [w, x_i])  \\
    s.t. \;\; & [w, w] > 0
\end{align*}
where $l_{\mathbb{L}^n}(z) = \max(0, \text{arcsinh}(1) - \text{arcsinh}(z))$. 
Unfortunately, this problem has a nonconvex constraint and nonconvex objective function.


\paragraph{PSVM}
Projected gradient descent is slow and may not converge to the global minimum. \cite{chien2021highly} removes the optimization constraints by first precomputing a reference point $p$ on the Poincar\'{e} ball and then projecting features via the logarithm map $\log_p$ onto the tangent space $T_p\D^n$, thereby casting hyperbolic SVM back into an ESVM. We call this method PSVM.
x
However, precomputing a reference point is not part of the SVM heuristics and may not find the maximum margin separating hyperplanes following this approach. The authors use a modified Graham Search \citep{graham1972efficient} to identify convex hulls on the Poincar\'{e} ball and use the midpoint of the two convex hulls as the reference point $p$. This does not have a valid justification when the two convex hulls overlap and may risk introducing a strong bias in the estimated hyperplane.

\paragraph{LSVMPP}
We build upon the framework of LSVM using our Euclidean parametrization of hyperplanes. Our reformulation lifts the nonconvex constraint $[w,w]>0$ and does not precompute any reference point. Using our parametrization described in the previous section, we have the following loss: 
\begin{align*}
    \underset{z \in \mathbb{R}^{n}, a \in \mathbb{R}}{\min} \; \frac{1}{2} \norm{z}^2 
    + C\sum_{i = 1}^n l_{\mathbb{L}^n}\left(y_i (\sinh(a)\norm{z}x_0  - \cosh(a) zx_r^\mathrm{T})\right).
\end{align*}
This lifts the constraint for optimization, but the objective still remains nonconvex. However, the empirical evaluation shows that this new SVM framework (LSVMPP) still outperforms the previous formulation LSVM, possibly by escaping the local minimums.

\subsection{Multiclass Classification Using SVM: Platt Scaling}
Suppose now that $y_i \in \{0, 1, ..., k\}$ for all $i \in \{1, ..., n\}$. A common way to train a multiclass classifier is by using the one-vs-all method, in which we train $k$ binary classifiers, using the $k$-th class as positive samples and negative samples otherwise. 

This method only gives us binary predictions if particular samples belong to a certain class, without weighting on the most and least probable class, which ultimately renders the predictions non-interpretable. To solve this issue, Platt scaling \citep{platt1999probabilistic} is a common method to transform binary values in the one-vs-all scheme in multiclass classification to the probabilities of each class. It itself is another machine learning optimization problem. We use ESVM as a demonstration in the following discussion.

In the $k$-th binary classifier, suppose $w^{(k)}$ is the trained vector determining a hyperplane (intercept included), the decision values are given by 
$$f^{(k)}(x_i) = \langle w^{(k)}, x_i \rangle.$$ 

This serves as a proxy of the distance between a sample to the separating hyperplane. Intuitively, if $f^{(k)}(x_i) > 0$ and if the sample is far away from the hyperplane, we should assign higher confidence, or probability, to the event that $x_i$ belong to the $k$-th class and lower probability for points closer to the hyperplane. \cite{platt1999probabilistic} argues that a logistic fit works well empirically to measure the associated probability, given by 
$$
P(y_i = 1 | f^{(k)}) = \frac{1}{1 + e^{Af^{(k)}(x_i) + B}}
$$
where $A, B$ are parameters trained by minimizing the negative log-likelihood of the training data through some optimization methods, typically through Newton's method.

This procedure repeats for every $f^{(k)}$. That is, we need $k$ (possibly) different sets of $(A, B)$ to transform all binary values into probabilities. 

\subsubsection{Adaptation of Platt Scaling on Hyperbolic Spaces}
\label{adaptation}
Poincar\'{e} SVM (PSVM) \citep{chien2021highly} easily adapts to Platt scaling since the hyperplane is trained on the tangent space of a reference point, which is Euclidean, and the decision values are similarly proxies for Euclidean distances (normed distance). 

This may become a problem for the other two hyperbolic models on the Lorentz manifold. In \citep{cho2019large}, the authors replace $f^{(k)}$ with 
$$
f^{(k), \text{LSVM}}(x) = [w^{(k)}, x]
$$
where $[\cdot, \cdot]$ denotes the Minkowski product. We argue that this loses its distance proxy functionality, and could be made more of a certain distance metric by 
$$
f^{(k), \text{LSVMPP}}(x) = \text{arcsinh}\left(-\frac{[w^{(k)}, x]}{\norm{w^{(k)}}_{\mathbb{L}^n}}\right)\norm{w^{(k)}}_{\mathbb{L}^n}
$$
and the last formulation is the input for the hyperbolic Platt scaling. 
Note that $\left|\text{arcsinh}\left(-\frac{[w^{(k)}, x]}{\norm{w^{(k)}}_{\mathbb{L}^n}}\right)\right|$ corresponds to the distance between the point $x$ and the hyperplane determined by $w^{(k)}$ (see \citep[Lemma 4.1]{cho2019large}\footnote{There is a slight imprecision in \citep[Lemma 4.1]{cho2019large} that the absolute value sign was missing.}). So the above formula denotes a normed signed distance between the point $x$ and the hyperplane determined by $w^{(k)}$.
We report that empirically there is an edge in accuracy using this formulation instead of the previous one (see \Cref{tab:real_dataset_result}). However, it remains to show whether the logistic fit is still a valid assumption, particularly for datasets with a latent hierarchical structure on hyperbolic spaces.

\subsection{Hyperbolic SVM Implementation and Hyperparameters}\label{implementation and hyperparams}
We rely on scikit-learn \citep{scikit-learn} for the realization of Euclidean and Poincar\'{e} SVM. The relevance of Poincar\'{e} SVM training is that after identifying a reference point and projecting Poincar\'{e} features onto its tangent space, features are Euclidean and hence Euclidean SVM training applies, although its intercept should be set to 0, since the precomputed reference point already encodes the intercept information. For Lorentz SVM, we inherit from the code provided in \citep{cho2019large,chien2021highly} and modify it for better run-time performances; and for our model LSVMPP, we use PyTorch \citep{paszke2017automatic} to help to compute the gradient automatically. 

Hyperparameters for four different SVM models should be different since they have different magnitudes in both loss and penalty terms. Margins, in particular, in the Euclidean, Poincar\'{e}, and Lorentz spaces for the same set of data are different. Therefore, hyperparameter search should be made separate for each model. 

The best performances of the Euclidean and Poincar\'{e} SVM are both using $C = 5$, with a learning rate of 0.001 and 3000 epochs. An early stop mechanism is built in the scikit-learn implementation \citep{scikit-learn} of Euclidean SVM, so not all 3000 epochs are utilized, and empirically 500 epochs suffice the respective optimality for all tested data; the best performance of LSVM and LSVMPP are in general brought by $C = 0.5$, with a learning rate around $10^{-10}$ (depending on the initial scale of the dataset) with 500 epochs. We note that the abnormally small learning rate is explained by the necessity of escaping the local minimum of a complex non-convex objective function.

\subsection{Details of Synthetic Data Generation}
\label{app:svm_syn_data_gen}
This section provides more detail on generation of synthetic datasets. As noted previously, we have two types of synthetic datasets: Gaussian mixtures and explicit trees. Gaussian mixture datasets are named sim\_data\_1, sim\_data\_2, and sim\_data\_3 and explicit trees are named sim\_data\_4, sim\_data\_5, sim\_data\_6. All synthetic datasets are visualized in \Cref{fig:all_svm_data}. 


In particular, to generate an explicit tree dataset, we simulate a birth-and-death process. In this process, the distribution of offspring follows a Poisson distribution with a parameter value of $\lambda = 3$, with an additional 1 added (resulting in a Poisson random variable starting with 1). The three plotted explicit trees represent three different realizations of this process. Once the tree structure is established, we initialize Euclidean embeddings that closely resemble the final embeddings in \Cref{fig:all_svm_data}. We then apply the tree embedding method, as described in \Cref{sec:epl}, utilizing Euclidean parametrization to optimize the embedding. We train with SGD using a learning rate of 10 for 50000 epochs.



\subsection{Full Results}
\label{app:svm_full_results}


We visualize all datasets on the Poincar\'{e} ball in \Cref{fig:all_svm_data}.

The average accuracy and F1 scores of five different runs on the same train-test split of each dataset are reported below in \Cref{tab:syn_dataset_result} and  \Cref{tab:real_dataset_result}. 

To maintain consistency in the evaluation, we adopt the train-test split provided in the GitHub referenced in \cite{chien2021highly} for the olsson, CIFAR, and fashion-MNIST datasets. For all other datasets, we utilize a 75\%/25\% train-test split stratified based on the class assignments.


In addition, we also compare to a model using Platt scaling on raw output, named ``LSVMPP (raw)" in the table below. ``LSVMPP (arcsinh)" is the version with adapted input for Platt scaling. The two versions share the same learning rate. See \Cref{adaptation} for more details. Notice that LSVMPP (raw) performs better than the adapted version when Euclidean SVM is at the top-notch; otherwise, it is worse and less stable in performance than the adapted version, indicating a lesser fit of the Platt training step with respect to the raw decision values.

\begin{figure}[htbp!]
    \centering
    
    \begin{subfigure}
        \centering
        \includegraphics[width=0.3\textwidth]{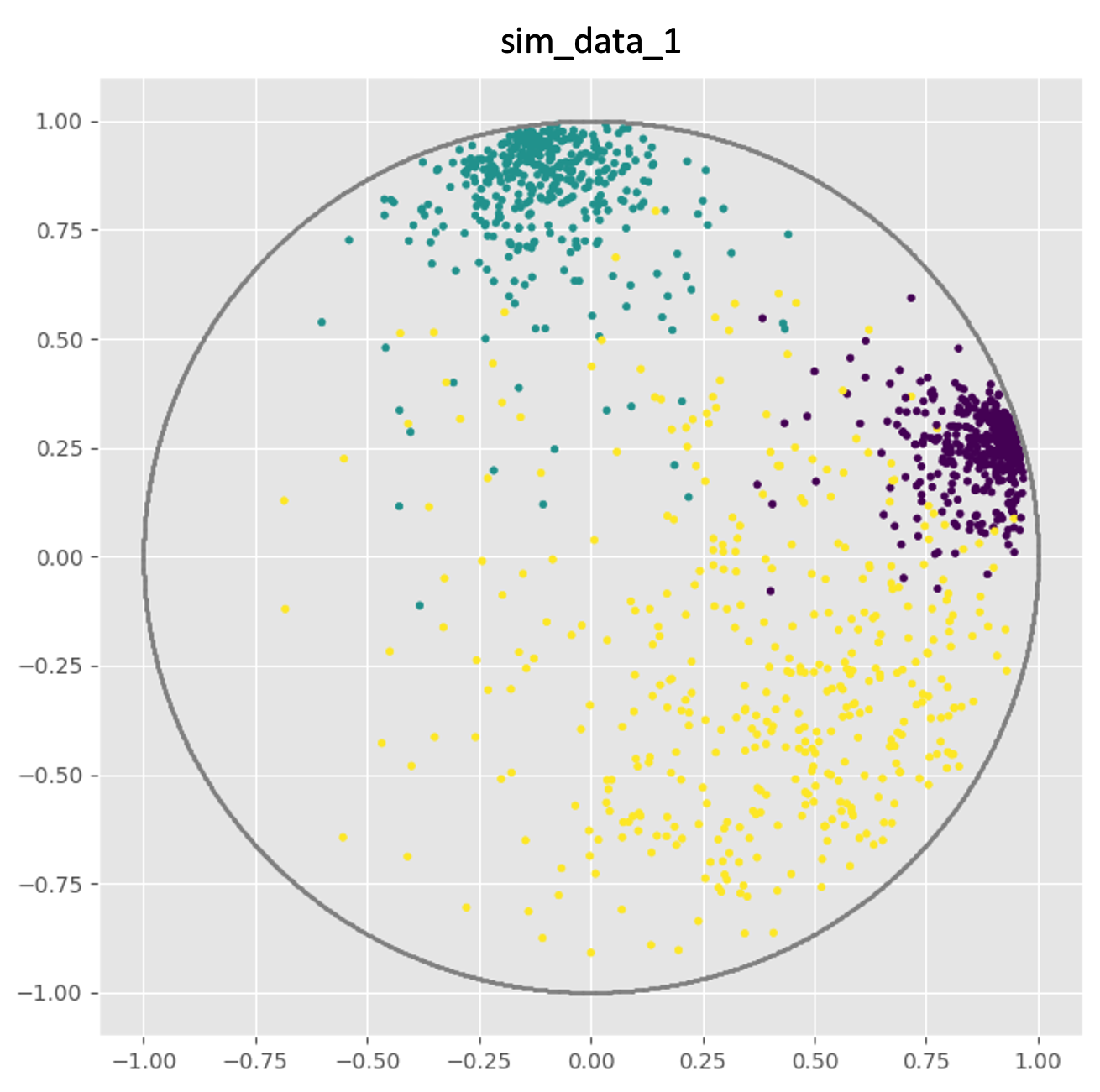}
    \end{subfigure} ~ \begin{subfigure}
        \centering
        \includegraphics[width=0.3\textwidth]{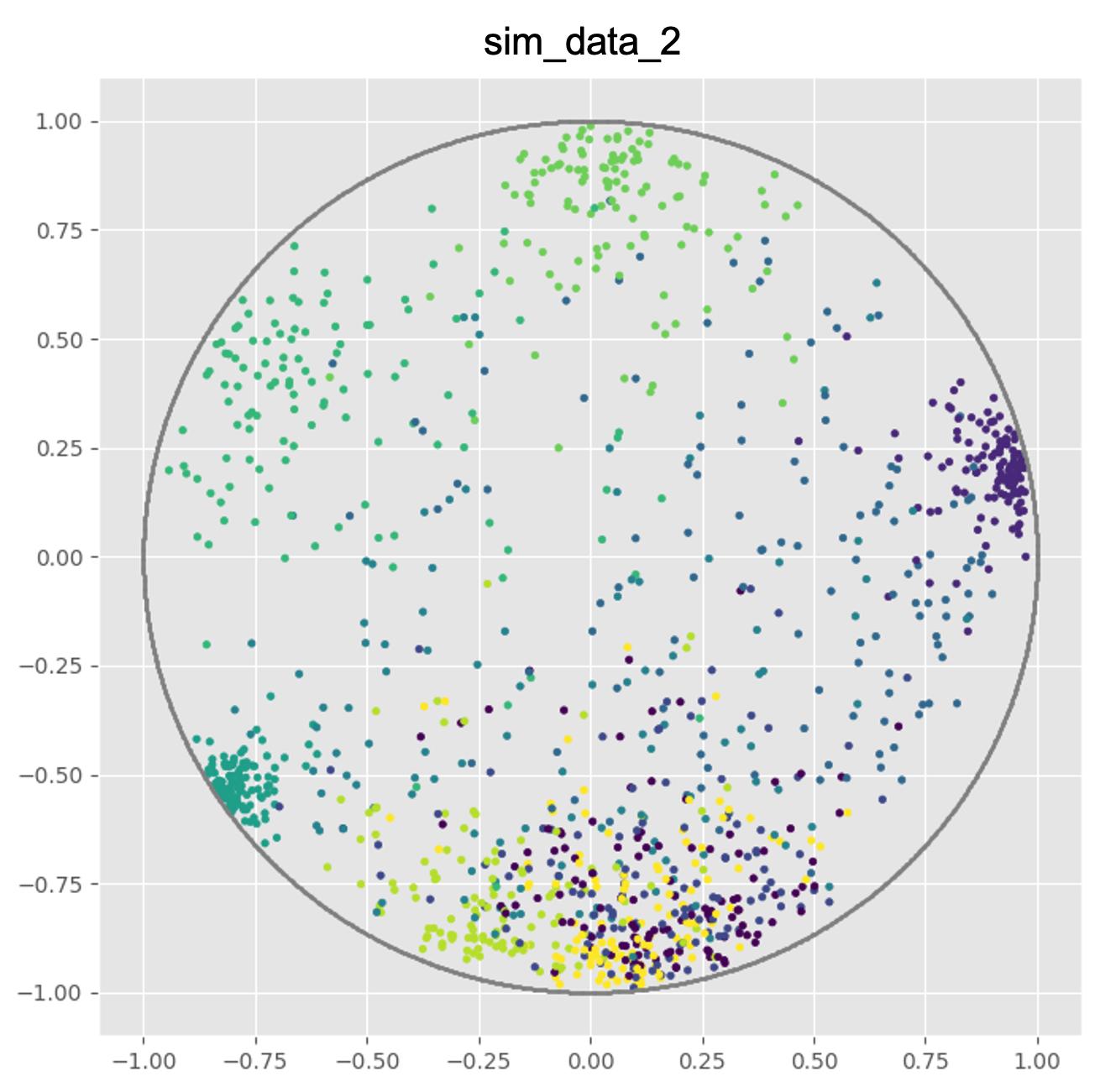}
    \end{subfigure} ~ 
    \begin{subfigure}
        \centering
        \includegraphics[width=0.3\textwidth]{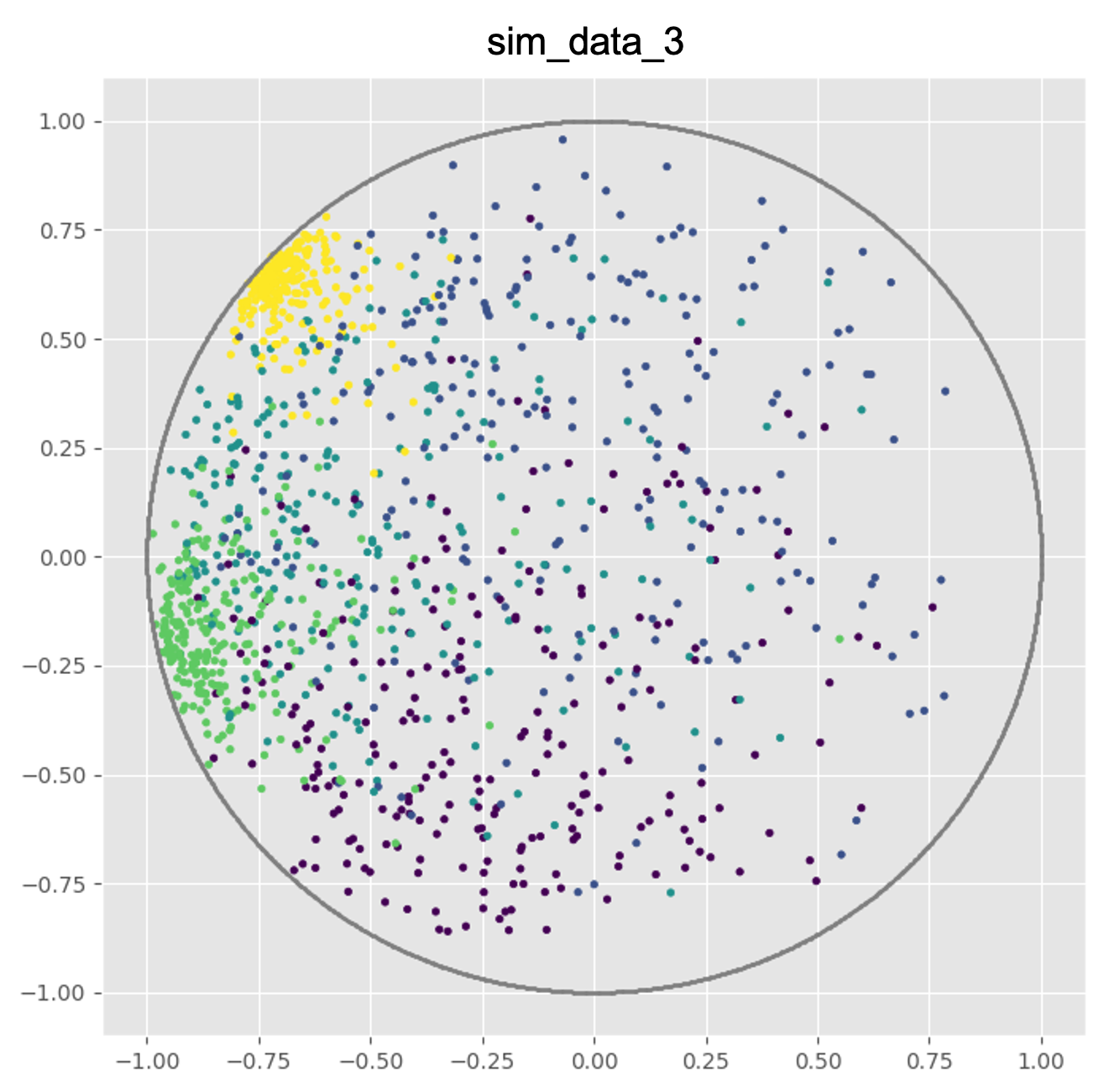}
    \end{subfigure} \\

    \begin{subfigure}
        \centering
        \includegraphics[width=0.3\textwidth]{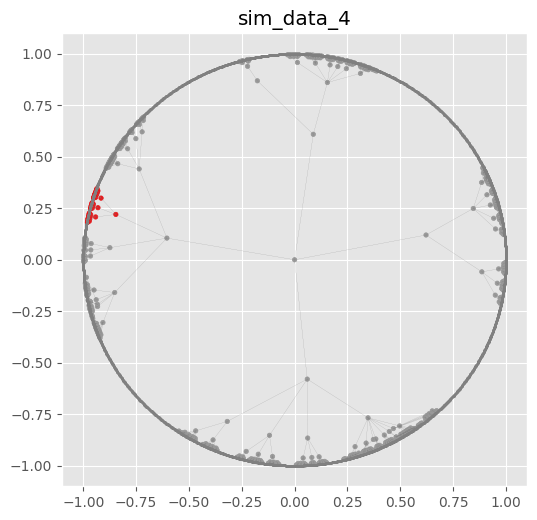}
    \end{subfigure} ~ \begin{subfigure}
        \centering
        \includegraphics[width=0.3\textwidth]{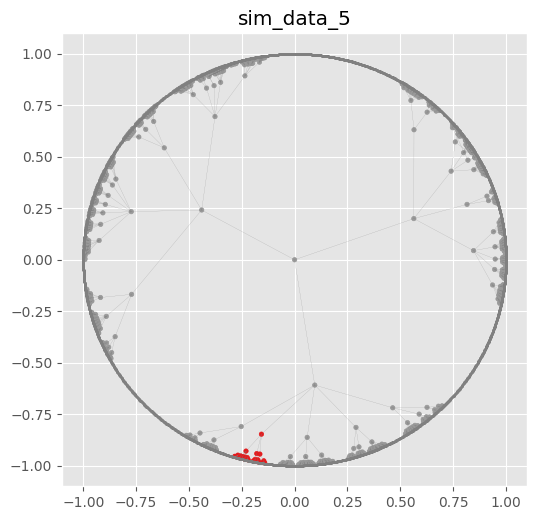}
    \end{subfigure} ~ 
    \begin{subfigure}
        \centering
        \includegraphics[width=0.3\textwidth]{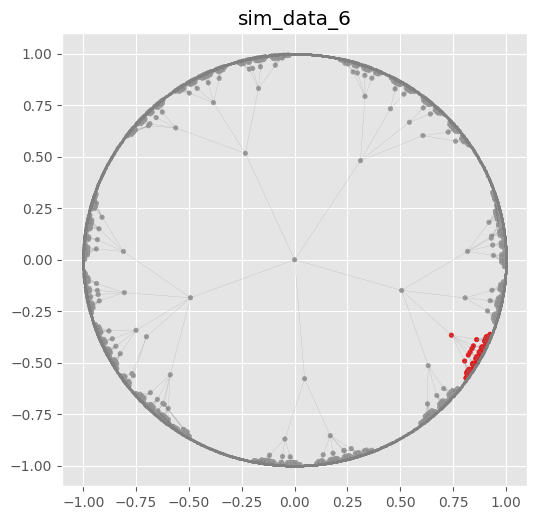}
    \end{subfigure} \\

    \begin{subfigure}
        \centering
        \includegraphics[width=0.3\textwidth]{images/cifar.png}
    \end{subfigure} ~ 
    \begin{subfigure}
        \centering
        \includegraphics[width=0.3\textwidth]{images/fashion-mnist.png}
    \end{subfigure} ~ \begin{subfigure}
        \centering
        \includegraphics[width=0.3\textwidth]{images/paul.png}
    \end{subfigure}  \\

    \begin{subfigure}
        \centering
        \includegraphics[width=0.3\textwidth]{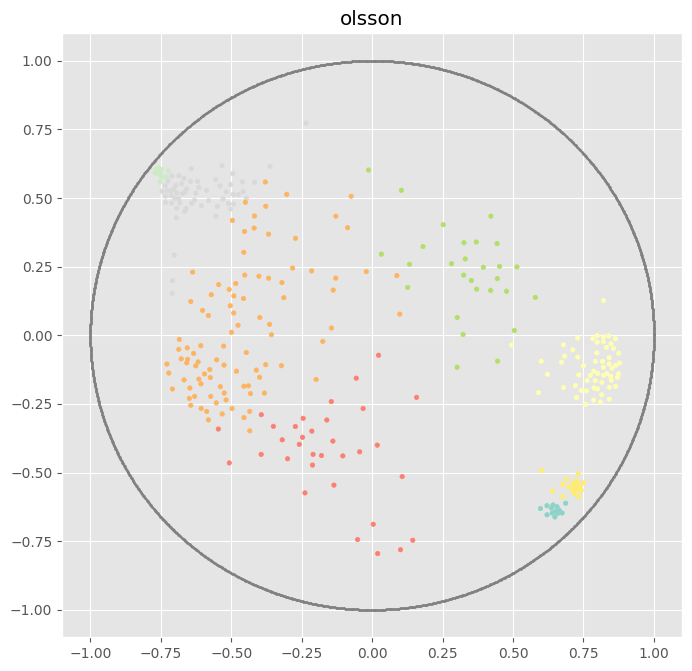}
    \end{subfigure} ~ 
    \begin{subfigure}
        \centering
        \includegraphics[width=0.3\textwidth]{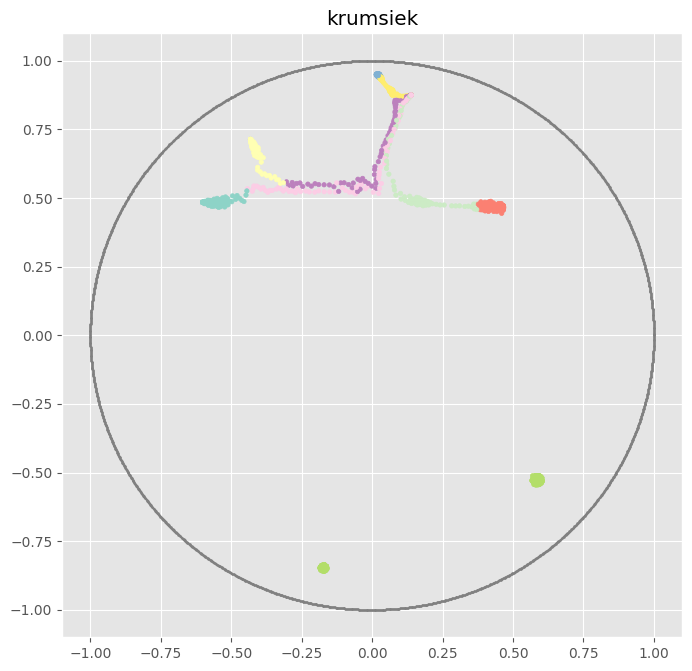}
    \end{subfigure} ~ \begin{subfigure}
        \centering
        \includegraphics[width=0.3\textwidth]{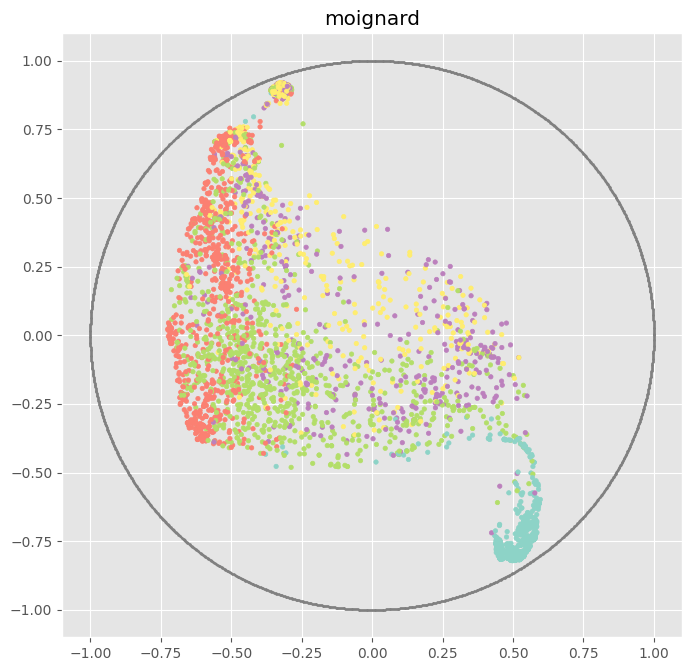}
    \end{subfigure}  \\
    \caption{Simulated (first two rows, with $k = 3, k = 5, k = 10$ from left to right for the first row respectively and $k=2$ for the second row) and real (last two rows) datasets: CIFAR ($k=10$), fashion-MNIST ($k=10$), paul ($k=19$), olsson ($k=8$), krumsiek ($k=11$), and moignard $k=7$. In each plot, different colors represent different classes. For the explicit tree datasets we also include the tree edges.}
    \label{fig:all_svm_data}
\end{figure}


\begin{table}[htbp!]
    \centering
    \caption{Mean accuracy and macro F1 score of synthetic Dataset in \Cref{fig:all_svm_data}.}
    \resizebox{0.65\columnwidth}{!}{%
    \begin{tabular}{cc|cc} \toprule
         & algorithm & accuracy (\%) & F1 \\ 
         \midrule
        \multirow{4}{*}{sim 1} & ESVM & 94.67 $\pm$ 0.00 & 0.9465 $\pm$ 0.00 \\
         & LSVM &  94.67 $\pm$ 0.00 & 0.9465 $\pm$ 0.00 \\
          & PSVM & 91.66 $\pm$ 0.00 & 0.9151 $\pm$ 0.00 \\
          & LSVMPP (raw) & 94.67 $\pm$ 0.00 & 0.9464 $\pm$ 0.00 \\
          & LSVMPP (arcsinh) & 94.67 $\pm$ 0.00 & 0.9464 $\pm$ 0.00 \\
        \midrule 
        \multirow{4}{*}{sim 2} & ESVM & 68.67 $\pm$ 0.00 & 0.6656 $\pm$ 0.00 \\ 
         & LSVM & 66.66 $\pm$ 0.00 & 0.6336 $\pm$ 0.00 \\
         & PSVM & 64.66 $\pm$ 0.00 & 0.6143 $\pm$ 0.00 \\
         & LSVMPP (raw) & \textbf{70.33 $\pm$ 0.00} & \textbf{0.6920 $\pm$ 0.00}	\\
         & LSVMPP (arcsinh) & 69.00 $\pm$ 0.00 & 0.6840 $\pm$ 0.00 \\
         \midrule
        \multirow{4}{*}{sim 3} & ESVM & 61.67 $\pm$ 0.00 & \textbf{0.5908 $\pm$ 0.00} \\ 
         & LSVM & 39.00 $\pm$ 0.00 & 0.3236 $\pm$ 0.00 \\ 
         & PSVM & 61.00 $\pm$ 0.00 & 0.5606 $\pm$ 0.00 \\ 
         & LSVMPP (raw) & 60.67 $\pm$ 0.00	& 0.5468 $\pm$ 0.00 \\
         & LSVMPP (arcsinh) & \textbf{61.93 $\pm$ 0.01} & 0.5881 $\pm$ 0.01 \\
         \midrule
         \multirow{4}{*}{sim 4} & ESVM & 93.40 $\pm$ 0.00 & 0.0000 $\pm$ 0.00 \\ 
         & LSVM & 96.04 $\pm$ 3.24 & 0.4000 $\pm$ 0.49 \\ 
         & PSVM & 99.53 $\pm$ 0.00 & 0.9630 $\pm$ 0.00 \\ 
         & LSVMPP (raw) & \textbf{100.00 $\pm$ 0.00}	& \textbf{1.0000 $\pm$ 0.00} \\
         & LSVMPP (arcsinh) & \textbf{100.00 $\pm$ 0.00} & \textbf{1.0000 $\pm$ 0.00} \\
         \midrule
         \multirow{4}{*}{sim 5} & ESVM & 96.25 $\pm$ 0.00 & 0.0000 $\pm$ 0.00 \\ 
         & LSVM & 96.25 $\pm$ 0.00 & 0.0000 $\pm$ 0.00 \\ 
         & PSVM & 99.63 $\pm$ 0.00 & 0.9474 $\pm$ 0.00 \\ 
         & LSVMPP (raw) & 99.63 $\pm$ 0.00 & 0.9474 $\pm$ 0.00 \\ 
         & LSVMPP (arcsinh) & {99.63 $\pm$ 0.00} & {0.9474 $\pm$ 0.00} \\ 
         \midrule
         \multirow{4}{*}{sim 6} & ESVM & 97.64 $\pm$ 0.00 & 0.7805 $\pm$ 0.00 \\ 
         & LSVM & 99.63 $\pm$ 0.13 & 0.9711 $\pm$ 0.01 \\ 
         & PSVM & 97.91 $\pm$ 0.00 & 0.8095 $\pm$ 0.00 \\ 
         & LSVMPP (raw) & 99.53 $\pm$ 0.42	& 0.9648 $\pm$ 0.03 \\
         & LSVMPP (arcsinh) & \textbf{99.63 $\pm$ 0.13} & \textbf{0.9714 $\pm$ 0.01} \\
         \bottomrule
    \end{tabular}%
    }
    \label{tab:syn_dataset_result}
\end{table}

\begin{table}[htbp!]
    \centering
    \caption{Mean accuracy and macro F1 score of real Dataset in \Cref{fig:all_svm_data}.}
    \resizebox{0.65\columnwidth}{!}{%
    \begin{tabular}{cc|cc} \toprule
    & algorithm & accuracy (\%) & F1 \\ 
    \midrule
        \multirow{4}{*}{CIFAR-10} & ESVM & 91.88 $\pm$ 0.00 & 0.9191 $\pm$ 0.00 \\ 
         & LSVM & 91.88 $\pm$ 0.00 & 0.9189 $\pm$ 0.00 \\
         & PSVM & 91.81 $\pm$ 0.00 & 0.9182 $\pm$ 0.00 \\
         & LSVMPP (raw) & 91.94 $\pm$ 0.00 & 0.9195 $\pm$ 0.00 \\
         & LSVMPP (arcsinh) & \textbf{91.96 $\pm$ 0.00} & \textbf{0.9197 $\pm$ 0.00} \\
         \midrule
        \multirow{4}{*}{fashion-MNIST} & ESVM & 86.37 $\pm$ 0.00 & 0.8665 $\pm$ 0.00 \\ 
         & LSVM & 71.59 $\pm$ 0.07 & 0.6588 $\pm$ 0.08 \\ 
         & PSVM & 86.57 $\pm$ 0.00 & 0.8665 $\pm$ 0.00 \\ 
         & LSVMPP (raw) & 89.35 $\pm$ 0.00 & 0.8939 $\pm$ 0.00 \\
         & LSVMPP (arcsinh) & \textbf{89.49 $\pm$ 0.00} & \textbf{0.8955 $\pm$ 0.00} \\
         \midrule 
         \multirow{4}{*}{paul} & ESVM & 55.05 $\pm$ 0.00 & 0.4073 $\pm$ 0.00 \\ 
         & LSVM & 58.36 $\pm$ 0.07 & 0.4517 $\pm$ 0.00 \\ 
         & PSVM & 55.25 $\pm$ 0.00 & 0.3802 $\pm$ 0.00 \\ 
         & LSVMPP (raw) & 62.55 $\pm$ 0.14 & 	0.4579 $\pm$ 0.01 \\
         & LSVMPP (arcsinh) & \textbf{62.64 $\pm$ 0.05} & \textbf{0.5024 $\pm$ 0.00} \\
        \midrule
         \multirow{4}{*}{olsson} & ESVM & 72.72 $\pm$ 0.00 & 0.4922 $\pm$ 0.00 \\ 
         & LSVM & 81.82 $\pm$ 0.00 & 0.7542 $\pm$ 0.00 \\
         & PSVM & \textbf{88.63 $\pm$ 0.00} & \textbf{0.8793 $\pm$ 0.00} \\
         & LSVMPP (raw) & 80.91 $\pm$ 0.45 & 0.7142 $\pm$ 0.03 \\
         & LSVMPP (arcsinh) & 84.09 $\pm$ 0.00 & 0.8429 $\pm$ 0.00 \\
         \midrule
        \multirow{4}{*}{krumsiek} & ESVM & 82.19 $\pm$ 0.00 & 0.6770 $\pm$ 0.00 \\ 
         & LSVM & 85.62 $\pm$ 0.39 & 0.6933 $\pm$ 0.92 \\ 
         & PSVM & 84.06 $\pm$ 0.00 & 0.6908 $\pm$ 0.00 \\ 
         & LSVMPP (raw) & 83.75 $\pm$ 0.28 & 0.6403 $\pm$ 0.01 \\
         & LSVMPP (arcsinh) & \textbf{86.25 $\pm$ 0.00} & \textbf{0.7079 $\pm$ 0.00} \\
         \midrule 
         \multirow{4}{*}{moignard} & ESVM & \textbf{67.78 $\pm$ 0.00} & \textbf{0.5934 $\pm$ 0.00} \\ 
         & LSVM & 64.88 $\pm$ 0.38 & 0.5502 $\pm$ 0.22 \\ 
         & PSVM & 60.77 $\pm$ 0.00 & 0.5167 $\pm$ 0.00 \\ 
         & LSVMPP (raw) & 65.77 $\pm$	0.13 &	0.5671	$\pm$ 0.00 \\
         & LSVMPP (arcsinh) & 65.22 $\pm$ 0.63 & 0.5719 $\pm$ 0.02 \\
         \bottomrule
    \end{tabular}%
    }
    \label{tab:real_dataset_result}
\end{table}


\end{document}